\def\eqref#1{equation~\ref{#1}}
\def\1{\bm{1}}
\DeclareMathAlphabet{\mathsfit}{\encodingdefault}{\sfdefault}{m}{sl}
\SetMathAlphabet{\mathsfit}{bold}{\encodingdefault}{\sfdefault}{bx}{n}
\newcommand{\cA}{\mathcal{A}}
\newcommand{\cX}{\mathcal{X}}
\newcommand{\cP}{\mathcal{P}}
\newcommand{\cT}{\mathcal{T}}
\newcommand{\effdim}{\text{feature rank}\xspace} 
\newcommand{\Effdim}{\text{Feature rank}\xspace} 
\newcommand{\infer}{\pyoi}
\newcommand{\pyoi}{\text{InFeR}\xspace} 
\newtheorem{theorem}{Theorem}
\newtheorem{corollary}{Corollary}
\newtheorem{definition}{Definition}
\crefname{appendix}{Appendix}{Appendices}
\crefname{figure}{Figure}{Figures}
\title{Understanding and Preventing Capacity Loss in Reinforcement Learning}
\author{Clare Lyle  \\
Department of Computer Science\\
University of Oxford\thanks{Correspondence to \texttt{clare.lyle@cs.ox.ac.uk} }\\
\And
Mark Rowland \& Will Dabney \\
DeepMind \\
}
\begin{document}
\maketitle

\begin{abstract}
The reinforcement learning (RL) problem is rife with sources of non-stationarity, making it a notoriously difficult problem domain for the application of neural networks.
We identify a mechanism by which non-stationary prediction targets can prevent learning progress in deep RL agents: \textit{capacity loss}, whereby networks trained on a sequence of target values lose their ability to quickly update their predictions over time.
We demonstrate that capacity loss occurs in a range of RL agents and environments, and is particularly damaging to performance in sparse-reward tasks. We then present a simple regularizer, Initial Feature Regularization (InFeR), that mitigates this phenomenon by regressing a subspace of features towards its value at initialization, leading to significant performance improvements in sparse-reward environments such as Montezuma's Revenge. We conclude that preventing capacity loss is crucial to enable agents to maximally benefit from the learning signals they obtain throughout the entire training trajectory. 
\end{abstract}

\section{Introduction}
Deep reinforcement learning has achieved remarkable successes in a variety of tasks \citep{mnih2015human, morav2017deep, silver2017mastering, abreu2019learning}, but its impressive performance is mirrored by its brittleness and sensitivity to seemingly innocuous design choices \citep{henderson2018deep}. In sparse-reward environments in particular, even different random seeds of the same algorithm can attain dramatically different performance outcomes. 
This presents a stark contrast to supervised learning, where existing approaches are reasonably robust to small hyperparameter changes, random seed inputs, and GPU parallelisation libraries.  Much of the brittleness of deep RL algorithms has been attributed to the non-stationary nature of the prediction problems to which deep neural networks are applied in RL tasks. Indeed, naive applications of supervised learning methods to the RL problem may require explicit correction for non-stationarity and bootstrapping in order to yield similar improvements to those seen in supervised learning \citep{bengio2021correcting, raileanu2020automatic}.

We hypothesize that the non-stationary prediction problems agents face in RL may be a driving force in the challenges described above. RL agents must solve a sequence of similar prediction tasks as they iteratively improve their value function accuracy and their policy \citep{dabney2021value}. Solving each subproblem (at least to the extent that the agent's policy is improved) in this sequence is necessary to progress to the next subproblem. Ideally, features learned to solve one subproblem would enable forward transfer to future problems. However, prior work on both supervised and reinforcement learning \citep{ash2020warm, igl2021transient, fedus2020catastrophic} suggests that the opposite is true: networks trained on a sequence of similar tasks are prone to overfitting, exhibiting \textit{negative} transfer.

The principal thesis of this paper is that over the course of training, deep RL agents lose some of their capacity to quickly fit new prediction tasks, and in extreme cases this capacity loss prevents the agent entirely from making learning progress. We present a rigorous empirical analysis of this phenomenon which considers both the ability of networks to learn new target functions via gradient-based optimization methods, and their ability to linearly disentangle states' feature representations. We confirm that agents' ability to fit new target functions declines over the course of training in several environments from the Atari suite \citep{bellemare2013arcade} and non-stationary reward prediction tasks. We further find that the ability of representations to linearly distinguish different states, a proxy for their ability to represent certain functions, quickly diminishes in sparse-reward environments, leading to representation collapse, where the feature outputs for every state in the environment inhabit a low-dimensional -- or possibly even zero -- subspace. Crucially, we find evidence that sufficient capacity is a \textit{necessary} condition in order for agents to make learning progress. Finally, we propose a simple regularization technique, Initial Feature Regularization (\pyoi), to prevent representation collapse by regressing a set of auxiliary outputs towards their value under the network's initial parameters. We show that this regularization scheme mitigates capacity loss in a number of settings, and also enables significant performance improvements in a number of RL tasks. 

One striking take-away from our results is that agents trained on so-called `hard exploration' games such as Montezuma's Revenge can attain significant improvements over existing competitive baselines \textit{without} using smart exploration algorithms, given a suitable representation learning objective. This suggests that the poor performance of deep RL agents in sparse-reward environments is not solely due to inadequate exploration, but rather also in part due to poor representation learning. Investigation into the interplay between representation learning and exploration, particularly in sparse-reward settings, thus presents a particularly promising direction for future work. 

\section{Background}

We consider the reinforcement learning problem in which an agent interacts with an environment formalized by a Markov Decision Process $\mathcal{M} = (\cX, \cA, R, \cP, \gamma)$, where $\cX$ denotes the state space, $\cA$ the action space, $R$ the reward function, $\cP$ the transition probability function, and $\gamma$ the discount factor. We will be primarily interested in \textit{value-based} RL, where the objective is to learn the value function $Q^\pi:\cX \times \cA \rightarrow \mathbb{R}$ associated with some (possibly stochastic) policy $\pi:\cX \rightarrow \mathscr{P}(\cA)$, defined as $Q^\pi(x, a) = \mathbb{E}_{\pi , \cP}[\sum_{k=0}^\infty \gamma^k R(x_k, a_k)|x_0=x, a_0=a ]$. In particular, we are interested in learning the value function associated with the optimal policy $\pi^*$ which maximizes the expected discounted sum of rewards from any state. 

In Q-Learning \citep{watkins1992q}, the agent performs updates to minimize the distance between a predicted action-value function $Q$ and the bootstrap target  $\cT Q$ defined as 
\begin{equation}
    \cT Q(x,a) = \mathbb{E}[R(x_0,a_0) + \gamma \max_{a' \in \cA} Q(x_1, a') | x_0 =x ,a_0=a] \; .
\end{equation} In most practical settings, updates are performed with respect to sampled transitions rather than on the entire state space. The target can be computed for a sampled transition $(x_t, a_t, r_t, x_{t+1})$ as
$\hat{\cT } Q(x_t, a_t) = r_t + \gamma \max_{a}Q(x_{t+1}, a)$.

When a deep neural network is used as a function approximator (the deep RL setting), $Q$ is defined to be the output of a neural network with parameters $\theta$, and updates are performed by gradient descent on sampled transitions $\tau = (x_t, a_t, r_t, x_{t+1})$. A number of tricks are often used to improve stability: the sample-based objective is minimized following stochastic gradient descent based on minibatches sampled from a replay buffer of stored transitions, and a separate set of parameters $\bar{\theta}$ is used to compute the targets $Q_{\bar{\theta}}(x_{t+1}, a_{t+1})$ which is typically updated more slowly than the network's online parameters. This yields the following loss function, given a sampled transition $\tau$:

\begin{equation}
    \ell_{TD}(Q_\theta, \tau) = (R_{t+1} + \gamma \max_{a'}Q_{\bar{\theta}} (X_{t+1}, a') - Q_{\theta}(X_t, A_t))^2 \, .
\end{equation}

In this work we will be interested in how common variations on this basic learning objective shape agents' learning dynamics, in particular the dynamics of the learned \textit{representation}, or \textit{features}. We will refer to the outputs of the final hidden layer of the network (i.e. the penultimate layer) as its features, denoted $\phi_{\theta}(x)$. Our choice of the penultimate layer is motivated by prior literature studying representations in RL \citep{ghosh2020representations, kumar2021implicit}, although many works studying representation learning consider the outputs of earlier layers as well. In general, the features of a neural network are defined to be the outputs of whatever layer is used to compute additional representation learning objectives.

\section{Capacity Loss}

Each time a value-based RL agent discovers a new source of reward in its environment or, in the case of temporal difference methods, updates its value estimate, the prediction problem it needs to solve changes. Over the course of learning, such an agent must solve a long sequence of target prediction problems as its value function and policy evolve. Studies of neural networks in supervised learning suggest that this sequential fitting of new targets may be harmful to a network's ability to adapt to new targets \citep[see Section~\ref{sec:related-work} for further details]{achille2018critical}. This presents a significant challenge to deep RL agents undergoing policy improvement, for which it is necessary to quickly make significant changes to the network's predictions even late in the training process. In this section, we show that training on a sequence of prediction targets \textit{can} lead to a reduced ability to fit new targets in deep neural networks, a phenomenon that we term \textit{capacity loss}. Further, we show that an agent's inability to quickly update its value function to distinguish states presents a barrier to performance improvement in deep RL agents.

\subsection{Target-fitting capacity}

The parameters of a neural network not only determine the network's current outputs, but also influence how these outputs will evolve over time. A network which outputs zero because its final-layer weights are zero will evolve differently from one whose ReLU units are fully saturated at zero despite both outputting the same function \citep{maas2013rectifier} -- in particular, it will have a much easier time adapting to new targets. It is this capacity to fit new targets that is crucial for RL agents to obtain performance improvements, and which frames our perspective on representation learning. We are interested in identifying when an agent's current parameters are flexible enough to allow it to perform gradient updates that meaningfully change its predictions based on new reward information in the environment or evolving bootstrap targets, a notion formalized in the following definition.

\begin{definition}[Target-fitting capacity]\label{def:tf_cap}
Let $P_X \in \mathscr{P}(X)$ be some distribution over inputs $X$ and $P_\mathcal{F}$ a distribution over a family of real-valued functions $\mathcal{F}$ with domain $X$. Let $\mathcal{N} = (g_\theta, \mathbf{\theta}_0)$ represent the pairing of a neural network architecture with some initial parameters $\mathbf{\theta}_0$, and $\mathcal{O}$ correspond to an optimization algorithm for supervised learning. We measure the \emph{target-fitting capacity} of $\mathcal{N}$ under the optimizer $\mathcal{O}$ to fit the data-generating distribution $\mathcal{D}=(P_X, P_\mathcal{F})$ as follows:

\begin{equation}
    \mathcal{C}(\mathcal{N}, \mathcal{O}, \mathcal{D}) = \mathbb{E}_{f \sim P_{\mathcal{F}}}[ \mathbb{E}_{x \sim P_X}[   (g_{\mathbf{\theta}'}(x) - f(x))^2 ]] \quad \;\text{where} \; \mathbf{\theta}' = \mathcal{O}(\theta_0, P_X, f) \, .
\end{equation}

\end{definition}

\begin{figure}
    \centering
        \begin{minipage}[t]{.450\textwidth}
        \captionsetup{width=0.9\linewidth}
        \includegraphics[width=0.84\linewidth]{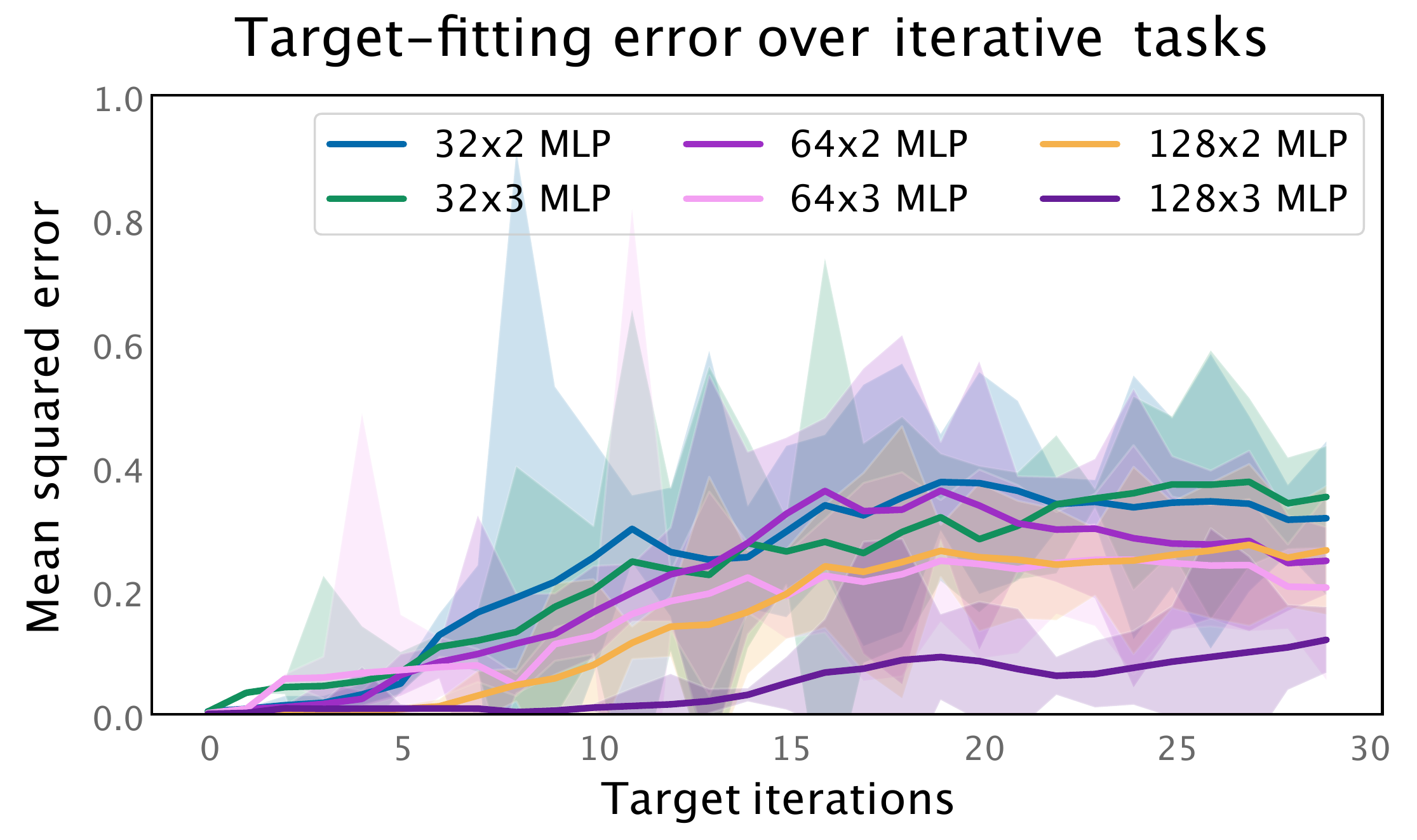}
        \caption{Networks trained to fit a sequence of different targets on MNIST data see increasing error on new target functions as the number of tasks trained on increases. }
         \label{fig:mnist-cap}
    \end{minipage}
    \begin{minipage}[t]{.535\textwidth}
    \captionsetup{width=0.95\linewidth}
        \includegraphics[width=\textwidth,keepaspectratio]{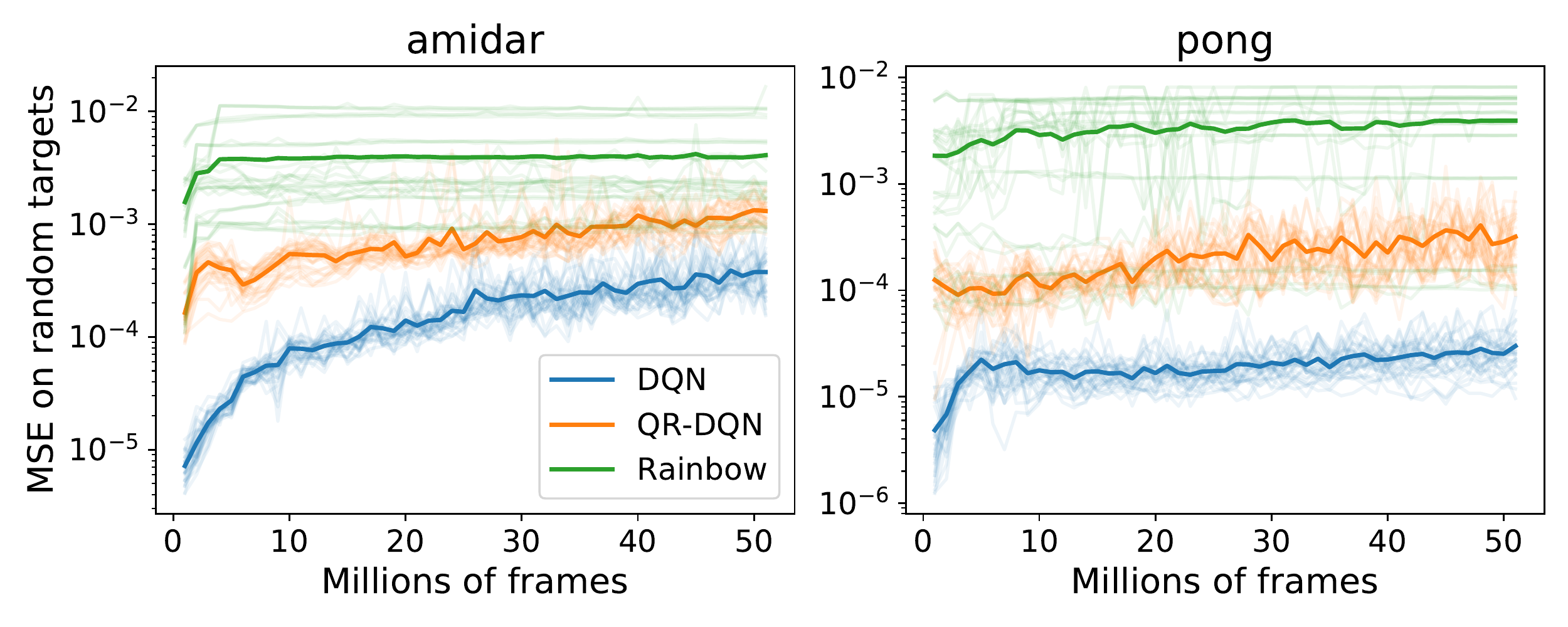}
        \caption{Networks see reduced ability to fit new targets over the course of training in two demonstrative Atari environments.}
        \label{fig:atari-cap}
    \end{minipage}\hfill

\end{figure}
Our definition of capacity measures the ability of a network to reach a new set of targets within a limited optimization budget from its current parameters and optimizer state. The choice of optimization budget and target distribution are left as hyperparameters, and different choices result in different notions of capacity. In reinforcement learning we ultimately care about the network's ability to fit its Bellman targets quickly, however the ability on its own will not necessarily be a useful measure: for example, a network which can only output the zero function will attain low Bellman error immediately on a sparse-reward environment, but will fail to produce useful updates to improve the policy. Our evaluations of this measure will use target functions that are independent of the current network parameters to avoid these pathologies; the effect of this choice is explored further in Appendix~\ref{appx:cap-loss-supervised}. 

The process of training a neural network to fit a set of labels must by necessity change some properties of the network. Works studying the information bottleneck principle \citep{tishby2015deep}, for example, identify a compression effect of training on the latent representation, where inputs with similar labels are mapped to similar feature vectors. This compression can benefit generalization on the current task, but in the face of the rapidly-changing nature of the targets used in value iteration algorithms may harm the learning process by impeding the network's ability to distinguish states whose values diverge under new targets. This motivates two hypotheses. 
First: that {networks trained to iteratively fit a sequence of dissimilar targets will lose their capacity to fit new target functions} (\textbf{Hypothesis 1}), and second: {the non-stationary prediction problems in deep RL also result in capacity loss} (\textbf{Hypothesis 2}). 

To evaluate \textbf{Hypothesis 1}, we construct a series of toy iterative prediction problems on the MNIST data set, a widely-used computer vision benchmark which consists of images of handwritten digits and corresponding labels. 
We first fit a series of labels computed by a randomly initialized neural network $f_\theta$: we transform input-label pairs $(x,y)$ from the canonical MNIST dataset to $(x, f_\theta(x))$, where $f_\theta(x)$ is the network output. To generate a new task, we simply reinitialize the network. Given a target function, we then train the network for a fixed budget from the parameters obtained at the end of the previous iteration, and repeat this procedure of target initialization and training 30 times. We use a subset of MNIST inputs of size 1000 to reduce computational cost.

In Figure~\ref{fig:mnist-cap} we see that the networks trained on this task exhibit decreasing ability to fit later target functions under a fixed optimization budget. This effect is strongest in the smaller networks, matching the intuition that solving tasks which are more challenging for the network will result in greater capacity loss. We consider two other tasks in Appendix~\ref{appx:cap-loss-supervised} as well as a wider range of architectures,  obtaining similar results in all cases. We find that sufficiently over-parameterized networks (on the order of one million parameters for a task with one thousand data points) exhibit positive forward transfer, however models which are not over-parameterized relative to the task difficulty consistently exhibit increasing error as the number of targets trained on grows. This raises a question concerning our second hypothesis: are the deep neural networks used by value-based RL agents on popular benchmarks in the over- or under-parameterized regime?

To evaluate \textbf{Hypothesis 2}, we investigate whether agent checkpoints obtained from training trajectories followed by popular value-based RL algorithms exhibit decreasing ability to fit randomly-generated targets as training progresses. We outline this procedure here, and provide full details in Appendix~\ref{appx:atari}. We generate target functions by randomly initializing neural networks with new parameters, and use the outputs of these networks as targets for regression.
We then load initial parameters from an agent checkpoint at some time $t$, sample inputs from the replay buffer, and regress on the random target function evaluated on these inputs. We then evaluate the mean squared error after training for fifty thousand steps. We consider a DQN \citep{mnih2015human}, a QR-DQN \citep{dabney2018distributional}, and a Rainbow agent \citep{hessel2018rainbow}. 
We observe in all three cases that as training progresses agents' checkpoints on average get modestly worse at fitting these random targets in most environments; due to space limitations we only show two representative environments where this phenomenon occurs in Figure~\ref{fig:atari-cap}, and defer the full evaluation to Appendix~\ref{appx:tf-capacity-atari}. 

\subsection{Representation collapse and performance}
The notion of capacity in Definition~\ref{def:tf_cap} measures the ability of a network to \textit{eventually} represent a given target function. This definition reflects the intuition that capacity should not increase over time. However, deep RL agents must \textit{quickly} update their predictions in order to make efficient learning progress.
We present an alternate measure of capacity that captures this property which we call the \effdim, as it corresponds to an approximation of the rank of a feature embedding. Intuitively, the \effdim measures how easily states can be distinguished by updating only the final layer of the network. This approximately captures a network's ability to quickly adapt to changes in the target function, while being significantly cheaper to estimate than Definition~\ref{def:tf_cap}.
\begin{definition}[\Effdim]
Let $\phi: X \rightarrow \mathbb{R}^d$ be a feature mapping. Let $\mathbf{X}_n \subset X$ be a set
of $n$ states in $X$ sampled from some fixed distribution $P$. Fix $\varepsilon \ge 0$, and let $\phi(\mathbf{X}_n) \in \mathbb{R}^{n \times d}$ denote the matrix whose rows are the feature embeddings of states $x \in \mathbf{X}_n$. Let $\textnormal{SVD}(M)$ denote the multiset of singular values of a matrix $M$. The $\effdim$ of $\phi$ given input distribution $P$ is defined to be 
\begin{equation}\label{eq:eff-dim}
\rho(\phi, P, \epsilon ) = \lim_{n \rightarrow \infty} \mathbb{E}_{\mathbf{X}_n \sim P}[|\{\sigma \in \textnormal{SVD} \bigg (\frac{1}{\sqrt{n}}\phi(\mathbf{X}_n) \bigg ) | \sigma > \varepsilon \} | ] \, 
\end{equation}
for which a consistent estimator can be constructed as follows, letting $\mathbf{X} \subseteq X, |\mathbf{X}| = n$
\begin{equation}\label{eq:eff-dim-samples}
\hat{\rho}_n(\phi, \mathbf{X}, \epsilon) = |\{\sigma \in \textnormal{SVD} \bigg (\frac{1}{\sqrt{n}}\phi(\mathbf{X}) \bigg ) | \sigma > \varepsilon \} |  \, .
\end{equation}
\end{definition}

The numerical $\effdim$ (henceforth abbreviated to \effdim) is equal to the dimension of the subspace spanned by the features when $\varepsilon=0$ and the state space $\cX$ is finite, and its estimator is equal to the numerical rank \citep{golub1976rank, meier2021fast} of the sampled feature matrix. For $\epsilon > 0$, it throws away small components of the feature matrix. We show that $\rho$ is well-defined and that $\hat{\rho}_n$ is a consistent estimator in \cref{appx:consistency}. 
Our analysis of the \effdim resembles that of \citet{kumar2021implicit}, but differs in two important ways: first, our estimator does not normalize by the maximal singular value. This allows us to more cleanly capture \textit{representation collapse}, where the network features, and thus also their singular values, converge to zero. Second, we are interested in the capacity of agents with unlimited opportunity to interact with the environment, rather than in the data-limited regime. We compare our findings on feature rank against the \textit{srank} used in prior work in Appendix~\ref{appx:cap-loss-supervised}.

\begin{figure}
    \centering
    \includegraphics[width=\textwidth,keepaspectratio]{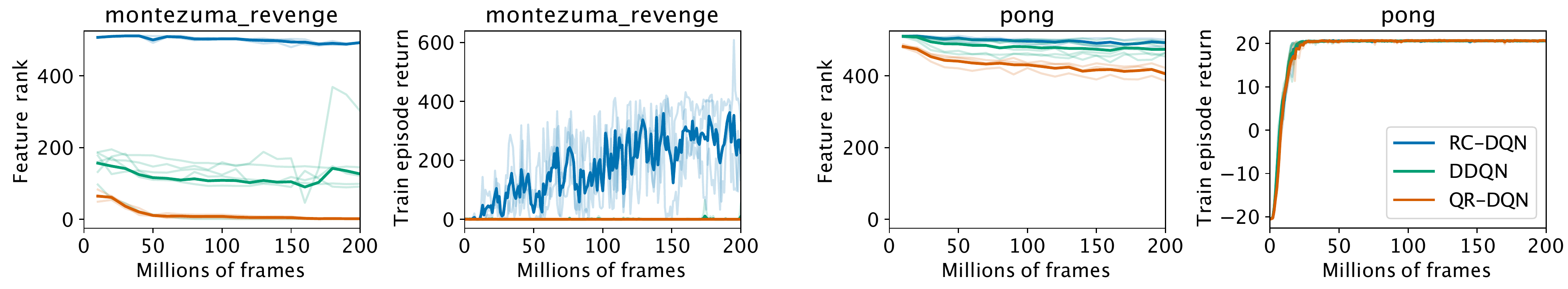}
    \caption{Feature rank and performance over the course of training for Montezuma's Revenge (left) and Pong (right). We observe that \effdim is higher for environments and auxiliary tasks which provide denser reward signals than for sparse reward problems. 
   }
   \label{fig:effdim_vanilla}
\end{figure}

In our empirical evaluations, we train a double DQN (DDQN) agent, a quantile regression (QRDQN) agent, and a double DQN agent with an auxiliary random cumulant prediction task (RC DQN) \citep{dabney2021value}, on environments from the Atari suite, then evaluate $\hat{\rho}_n$ with $n=5000$ on agent checkpoints obtained during training. We consider two illustrative environments: Montezuma's Revenge (sparse reward), and Pong (dense reward), deferring two additional environments to Appendix~\ref{appx:tf-capacity-atari}. We run 3 random seeds on each environment-agent combination.

We visualize agents' \effdim and performance in Figure~\ref{fig:effdim_vanilla}. Non-trivial prediction tasks, either value prediction in the presence of environment rewards or auxiliary tasks, lead to higher \effdim. In Montezuma's Revenge, the higher \effdim  induced by RC DQN corresponds to higher performance, but this auxiliary loss can have a detrimental effect on learning progress in complex, dense-reward games presumably due to interference between the random rewards and the true learning objective. Unlike target-fitting capacity, where networks in dense-reward games exhibited declines, we only see a consistent downward trend in sparse-reward environments, where a number of agents, most dramatically QR-DQN, exhibit representation collapse. We discuss potential mechanisms behind this trend in Appendix~\ref{appx:feature_theory}.

\begin{figure}
    \centering
    \includegraphics[width=0.95\linewidth]{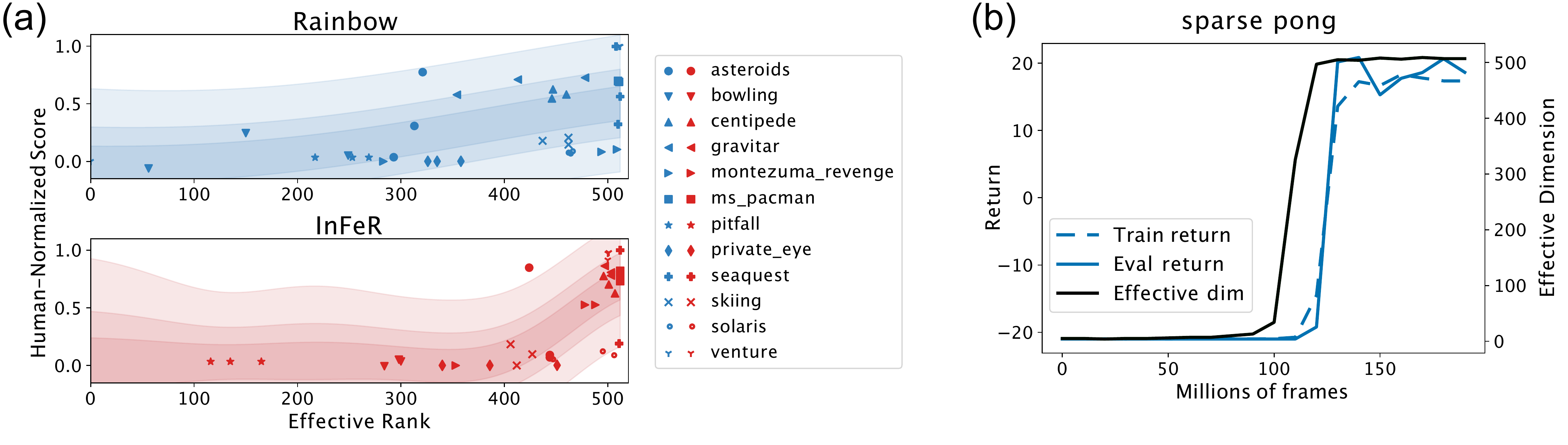}
    \caption{\textbf{(a)}: Agent capacity vs human-normalized score in games where Rainbow does not achieve superhuman performance. While $\effdim$ does not appear to solely determine agent performance, there is a positive correlation between $\effdim$ and human-normalized score. Bottom row contains Rainbow agents trained with the regularizer presented in Equation~\ref{eq:infer}. \textbf{(b)} An `unlucky' seed from our evaluations on the sparsified version of Pong, where learning progress occurs only after the agent recovers from representation collapse.
    }
    \label{fig:capacity-update}
\end{figure}

Figure~\ref{fig:capacity-update}a reveals a correlation between learning progress and \effdim for the Rainbow agent \citep{hessel2018rainbow} trained on challenging games in the Atari 2600 suite where it fails to achieve human-level performance; this trend is also reflected for other agents described in the next section.
The points on the scatterplot largely fall into two clusters: those with low \effdim, which attain less than half of the average human score, and those with high \effdim, which tend to attain higher scores. Having a sufficiently high \effdim thus appears to be a \textit{necessary} condition for learning progress, as demonstrated by the learning curves shown in Figure~\ref{fig:capacity-update}b, which highlights an unlucky agent trained on a variant of Pong (described in Appendix~\ref{appx:feature-rank-atari}) which experienced representation collapse, and only solved the task \textit{after} it had overcome this collapse.  

However, high feature rank does not appear to be sufficient for learning progress. Other properties of an agent, such as its ability to perform accurate credit assignment, the stability of its update rule, the suitability of its optimizer, its exploration policy, and countless others, must be appropriately tuned to a given task in order for progress to occur. Simply mapping inputs to a relatively uniform distribution in feature space will not overcome failures in other components of the RL problem. An agent must be able to {both} collect useful learning signals from the environment {and} effectively update its predictions in response to those signals in order to make learning progress. This section has shown that at least in some instances poor performance can be attributed to the latter property.  

\section{\pyoi: Mitigating Capacity Loss with Feature Regularization}
\label{sec:pyoi}

The previous section showed that capacity loss occurs in deep RL agents trained with online data, and in some cases appears to be a bottleneck to performance. We now consider how it might be mitigated, and whether explicitly regularizing the network to preserve its initial capacity improves performance in environments where representation collapse occurs. Our approach involves a function-space perspective on regularization, encouraging networks to preserve their ability to output linear transformations of their features at initialization. 

\subsection{\pyoi: Feature-space regularization}
Much like parameter regularization schemes seek to keep \textit{parameters} close to their initial values, we wish to keep a network's ability to fit new targets close to its initial value. We motivate our approach with the intuition that a network which has preserved the ability to output functions it could easily fit at initialization should be better able to adapt to new targets. To this end, we will regress a set of network \textit{outputs} towards the values they took at initialization. Our method, Initial Feature Regularization (\infer), applies an $\ell_2$ regularization penalty on the output-space level by regressing a set of auxiliary network output heads to match their values at initialization. Similar perspectives have been used to prevent catastrophic forgetting  in continual learning \citep{benjamin2018measuring}.

In our approach, illustrated in Figure~\ref{fig:effdim_pyoi}, we begin with a fixed deep Q-learning neural network with parameters $\theta$, and modify the network architecture by adding $k$ auxiliary linear prediction heads $g_i$ on top of the feature representation $\phi_\theta$. We take a snapshot of the agent's parameters at initialization $\theta_0$, and use the outputs of the $k$ auxiliary heads under these parameters as auxiliary prediction targets. We then compute the mean squared error between the outputs of the heads under the current parameters  $g_i(x; \theta_t)$ and their outputs at initialization $g_i(x; \theta_0)$. This approach has the interpretation of amplifying and preserving subspaces of the features that were present at initialization. In practice, we find that scaling the auxiliary head outputs by a constant $\beta$ increases this amplification effect. This results in the following form of our regularization objective, where we let $\mathcal{B}$ denote the replay buffer sampling scheme used by the agent: 

\begin{equation}\label{eq:infer}
    \mathcal{L}_{\pyoi}(\theta, \theta_0; \mathcal{B}, \beta) = \mathbb{E}_{x \sim \mathcal{B}} \bigg [\sum_{i=1}^k (g_i(x; \theta) - \beta g_i(x; \theta_0) )^2 \bigg ] \; .
\end{equation}

\begin{figure}[t]
    \centering
    \includegraphics[width=\linewidth]{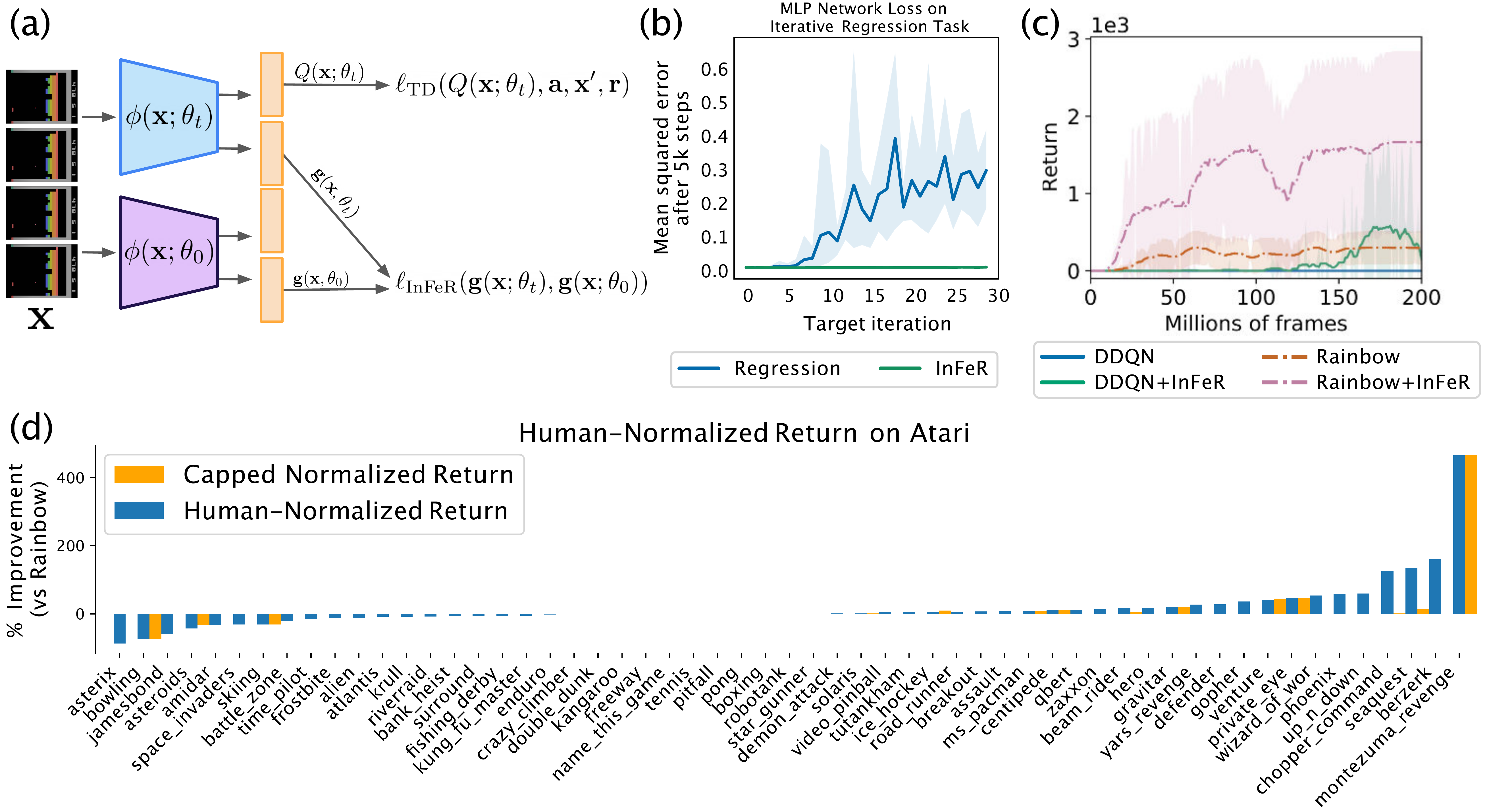}
    \caption{\textbf{(a)} Visualization of \pyoi. \textbf{(b)}  Analysis of the effect of InFeR on capacity loss. \textbf{(c)} Effect of InFeR on performance in Montezuma's Revenge with respect to Rainbow and Double DQN baselines. \textbf{(d)} Performance of \pyoi relative to Rainbow on all 57 Atari games.}
    \label{fig:effdim_pyoi}
    \vspace{0.2cm}
\end{figure}

We evaluate the effect of incorporating this loss in both DDQN \citep{van2016deep} and Rainbow \citep{hessel2018rainbow} agents, and include the relative performance improvement obtained by the \pyoi agents over Rainbow on 57 games from the Atari 2600 suite in \cref{fig:effdim_pyoi}, deferring the comparison to DDQN, where the regularizer improved performance slightly on average but only yielded significant improvements on sparse-reward games, to the appendix. We observe a net improvement over the Rainbow baseline by incorporating the \pyoi  objective, with significant improvements in games where agents struggle to obtain human performance. The evaluations in Figure~\ref{fig:effdim_pyoi} are for $k=10$ heads with $\beta=100$ and $\alpha=0.1$, and we show the method's robustness to these hyperparameters in Appendix~\ref{appx:hypers}.
We further observe in Figure~\ref{fig:effdim_pyoi} that the \pyoi loss reduces target-fitting error on the non-stationary MNIST prediction task described in the previous section. We show in Appendix~\ref{appx:feature-rank-atari} that \pyoi tends to increase the \effdim of agents trained on the Atari domain over the entire course of training; we study the early training period in Appendix~\ref{appx:tf-capacity-atari}. 

The striking improvement obtained by. the Rainbow agent in the sparse-reward Montezuma's Revenge environment begs the question of whether such results can be replicated in other RL agents.
We follow the same experimental procedure as before, but now use the DDQN agent; see Figure~\ref{fig:effdim_pyoi}. We find that adding \infer to the DDQN objective produces a similar improvement as does adding it to Rainbow, leading the DDQN agent,
which only follows an extremely naive $\epsilon$-greedy  exploration strategy and obtains zero reward at all points in
training, to exceed the performance of the noisy networks approach taken by Rainbow in the last 40 million training frames. 
This leads to two intriguing conclusions: first, that agents which are explicitly regularized to prevent representation collapse \textit{can} make progress in sparse reward problems without the help of good exploration strategies; and second, that this form of regularization yields significantly larger performance improvements in the presence of additional algorithm design choices that are designed to speed up learning progress.  

\subsection{Understanding How \pyoi Works}
\begin{figure}[b]
        \includegraphics[height=.13\textheight,keepaspectratio]{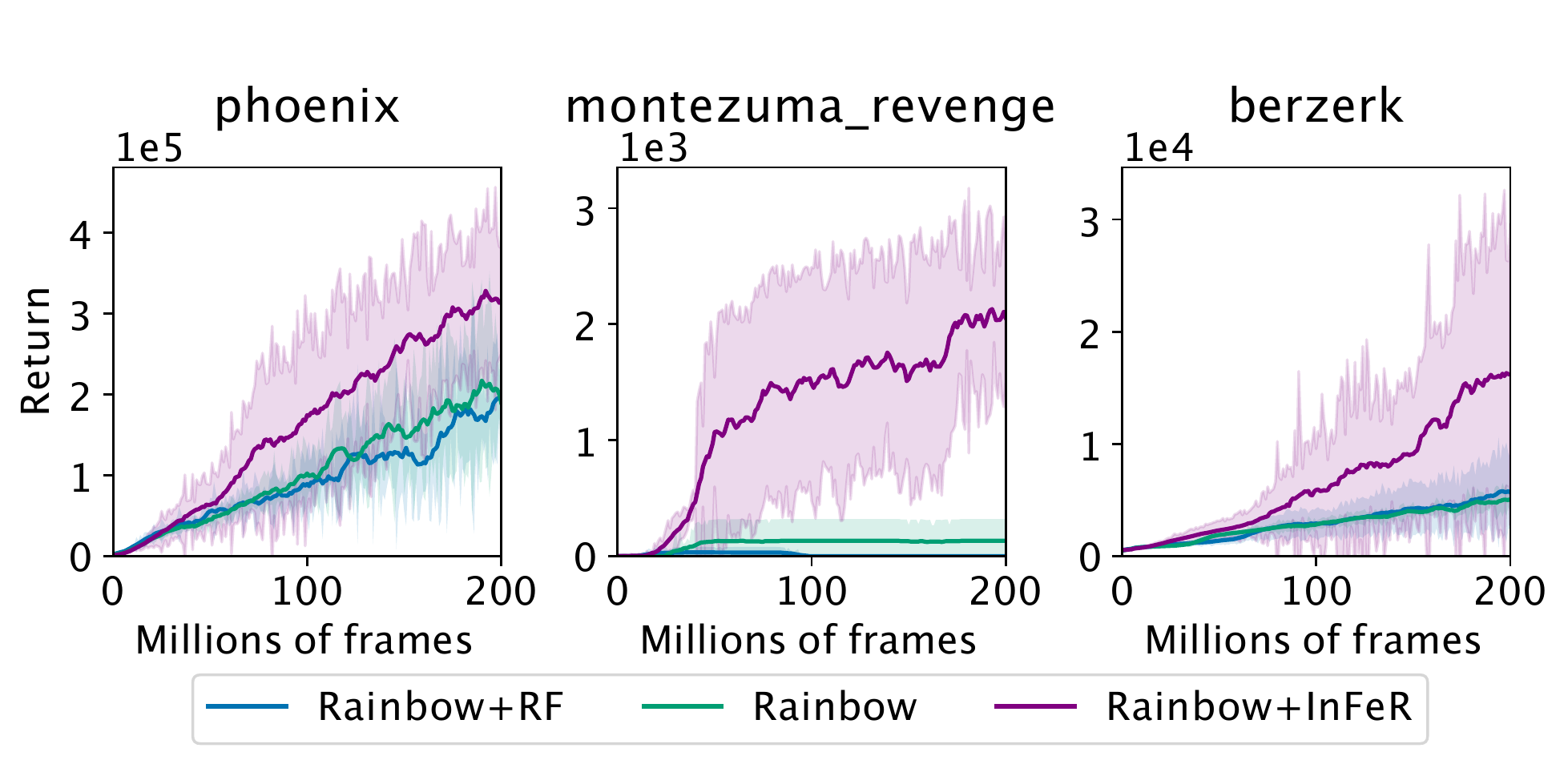}
        \hfill
        \includegraphics[height=.13\textheight,keepaspectratio]{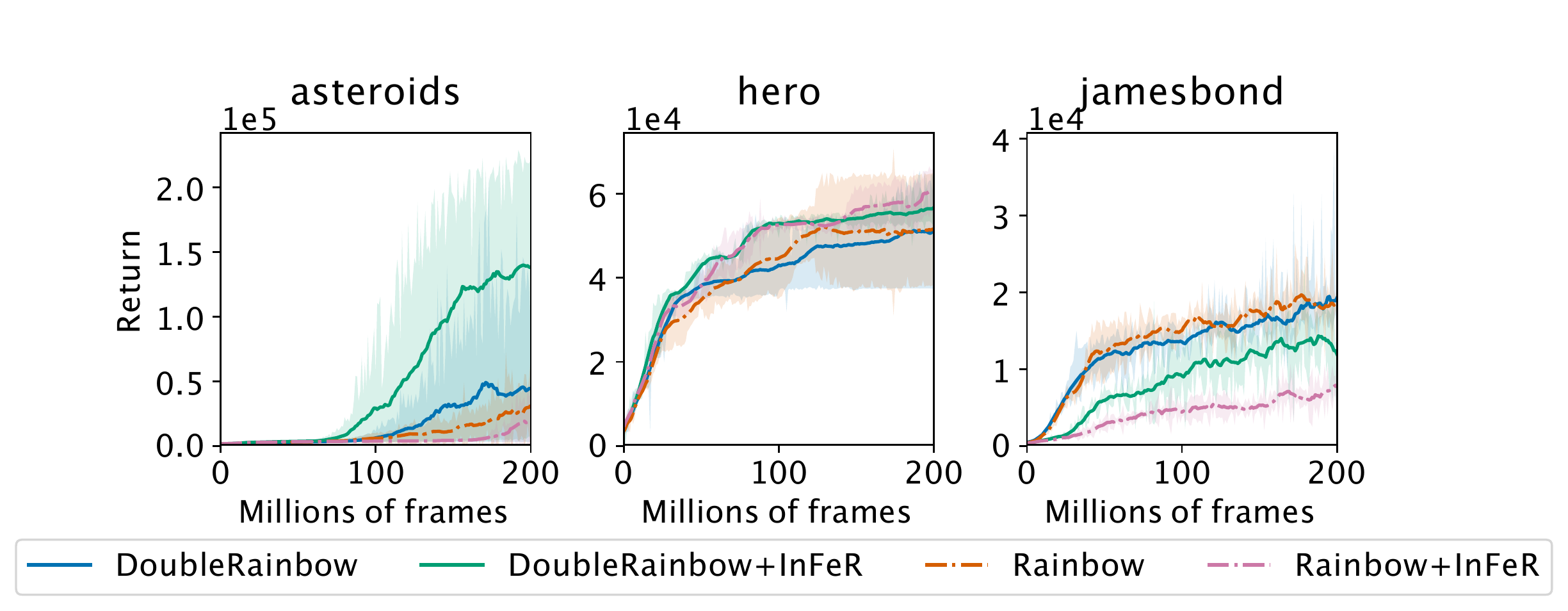}\vspace{-1mm}
        \caption{Left: agent performance does not improve over baseline when random features are added to the representation. Right: doubling the width of the neural network narrows the performance gap in games on which \pyoi under-performed relative to Rainbow.}
        \label{fig:double}
\end{figure}
While \pyoi improves performance \textit{on average} across the Atari games, its improvements are concentrated principally on games where the baseline Rainbow agent performs significantly below the human baseline. It further slows down progress in a subset of environments such as \textit{Asteroids} and \textit{Jamesbond}. We now investigate two hypothesized mechanisms by which this regularizer may shape the agent's representation, in the hopes of explaining this differential effect on performance.
\textbf{Hypothesis 1:} \pyoi improves performance by preserving a random subspace of the representation that the final linear layer can use to better predict the value function. The effect of the regularizer on other aspects of the representation learning dynamics does not influence performance.
\textbf{Hypothesis 2:} The \pyoi loss slows down the rate at which the learned features at every layer of the network can drift from their initialization in function space, improving the learning dynamics of the entire network to prevent  feature collapse and over-fitting to past targets. The precise subspace spanned by the auxiliary weights is not directly useful to value function estimation.

To evaluate Hypothesis 1, we concatenate the outputs of a randomly initialized network to the feature outputs of the network used to learn the Q-function, and train a linear layer on top of these joint learned and random features. If Hypothesis 1 were true, then we would expect this architecture to perform comparably to the \pyoi agents, as the final linear layer has access to a randomly initialized feature subspace. Instead, Figure~\ref{fig:double} shows that the performance of the agents with access to the random features to be comparable to that of the vanilla Rainbow agents, confirming that the effect of \pyoi on earlier layers is crucial to its success. 

We now consider Hypothesis 2. \pyoi limits the degrees of freedom with which a network can collapse its representation, which may reduce the flexibility of the network to make the changes necessary to fit new value functions, slowing down progress in environments where representation collapse is not a concern. In such cases, increasing the dimension of the layer to which we apply \pyoi should give the network more degrees of freedom to fit its targets, and so reduce the performance gap induced by the regularization.
We test this hypothesis by doubling the width of the penultimate network layer and comparing the performance of \pyoi and Rainbow on games where \pyoi hurt performance in the original network. We refer to this agent as DoubleRainbow. We see in Figure~\ref{fig:double} that increasing the network's size reduces, eliminates, or in some cases reverses the performance gap induced by \pyoi in the smaller architecture. We therefore conclude that the principal mechanism by which \pyoi affects performance is by regularizing the entire network's learning dynamics.

\section{Related Work}\label{sec:related-work}
\vspace{-1mm}
Suitably designed auxiliary tasks have been shown to improve performance and encourage learned representations to satisfy desirable properties in a wide range of settings \citep{jaderberg2016reinforcement, veeriah2019discovery, gelada2019deepmdp, machado2018eigenoption}, with further insight given by prior analysis of the geometry \citep{bellemare2019geometric} and stability \citep{ghosh2020representations} of value functions in RL. Our analysis of linear algebraic properties of agents' representations is complemented by prior works which leverage similar ideas to analyze implicit under-parameterization \citep{kumar2021implicit} and spectral normalization \citep{gogianu2021spectral} in deep RL agents, and by the framework proposed by \citet{lyle2021effect} to study learning dynamics in deep RL agents. In contrast to prior work, which treats the layers of the network which come before the features as a black box, we explicitly study the properties and learning dynamics of the whole network.

A separate line of work has studied the effect of interference between sub-tasks in both reinforcement learning \citep{schaul2019ray, teh2017distral, igl2021transient} and supervised learning settings \citep{sharkey1995analysis, ash2020warm, beck2021effective}. Of particular interest has been catastrophic forgetting, with prior work proposing novel training algorithms using regularization \citep{kirkpatrick2017overcoming, bengio2013empirical, lopez2017gradient} or distillation \citep{schwarz2018progress, silver2002task, li2017learning} approaches. Methods which involve re-initializing a new network have seen particular success at reducing interference between tasks in deep reinforcement learning \citep{igl2021transient, teh2017distral, rusu2016policy, fedus2020catastrophic}. A closer relative of our approach is that of \citet{benjamin2018measuring}, which also applies a function-space regularization approach, but which involves saving input-output pairs into a memory bank with the goal of mitigating catastrophic forgetting. Unlike prior work, \pyoi seeks to maximize performance on \textit{future} tasks, works without task labels, and incurs a minimal, fixed computational cost independent of the number of prediction problems seen during training. 
\section{Conclusions}
\vspace{-1mm}
This paper has demonstrated a fundamental challenge facing deep RL agents: loss of the capacity to distinguish states and represent new target functions over the course of training. We have shown that this phenomenon is particularly salient in sparse-reward settings, 
in some cases leading to complete collapse of the representation and preventing the agent from making learning progress. Our analysis revealed a number of nuances to this phenomenon, showing that larger networks trained on rich learning signals are more robust to capacity loss than smaller networks trained to fit sparse targets. To address this challenge, we proposed a regularizer to preserve capacity, yielding improved performance across a number of settings in which deep RL agents have historically struggled to match human performance. Further investigation into this method suggests that it is performing a form of function-space regularization on the neural network, and that settings where it appears the task reduces performance are actually instances of under-parameterization relative to the difficulty of the environment. Particularly notable is the effect of incorporating \infer in the hard exploration game of Montezuma's Revenge: its success here suggests that effective representation learning can allow agents to learn good policies in sparse-reward environments even under naive exploration strategies. Our findings open up a number of exciting avenues for future work in reinforcement learning and beyond to better understand how to preserve plasticity in non-stationary prediction tasks. 
\section*{Acknowledgements}
Thanks to Georg Ostrovski, Michael Hutchinson, Joost van Amersfoort, Daniel Guo, Diana Borsa, Anna Harutyunyan, Razvan Pascanu, Caglar Gulcehre, Srivatsan Srvinivasan, and Remi Munos for helpful discussions and feedback on early versions of this paper. CL is supported by an Open Philanthropy AI Fellowship. 

\clearpage
\bibliography{references}

\begin{thebibliography}{44}
\providecommand{\natexlab}[1]{#1}
\providecommand{\url}[1]{\texttt{#1}}
\expandafter\ifx\csname urlstyle\endcsname\relax
  \providecommand{\doi}[1]{doi: #1}\else
  \providecommand{\doi}{doi: \begingroup \urlstyle{rm}\Url}\fi

\bibitem[Abreu et~al.(2019)Abreu, Reis, and Lau]{abreu2019learning}
Miguel Abreu, Luis~Paulo Reis, and Nuno Lau.
\newblock Learning to run faster in a humanoid robot soccer environment through
  reinforcement learning.
\newblock In \emph{Robot World Cup}, pp.\  3--15. Springer, 2019.

\bibitem[Achille et~al.(2018)Achille, Rovere, and Soatto]{achille2018critical}
Alessandro Achille, Matteo Rovere, and Stefano Soatto.
\newblock Critical learning periods in deep networks.
\newblock In \emph{International Conference on Learning Representations}, 2018.

\bibitem[Ash \& Adams(2020)Ash and Adams]{ash2020warm}
Jordan Ash and Ryan~P Adams.
\newblock On warm-starting neural network training.
\newblock In \emph{Neural Information Processing Systems (NeurIPS)}, 2020.

\bibitem[Beck et~al.(2021)Beck, Sivasubramanian, Dani, Ramakrishnan, and
  Iyer]{beck2021effective}
Nathan Beck, Durga Sivasubramanian, Apurva Dani, Ganesh Ramakrishnan, and
  Rishabh Iyer.
\newblock Effective evaluation of deep active learning on image classification
  tasks.
\newblock \emph{arXiv}, 2021.

\bibitem[Bellemare et~al.(2019)Bellemare, Dabney, Dadashi, Ali~Taiga, Castro,
  Le~Roux, Schuurmans, Lattimore, and Lyle]{bellemare2019geometric}
Marc Bellemare, Will Dabney, Robert Dadashi, Adrien Ali~Taiga, Pablo~Samuel
  Castro, Nicolas Le~Roux, Dale Schuurmans, Tor Lattimore, and Clare Lyle.
\newblock A geometric perspective on optimal representations for reinforcement
  learning.
\newblock In \emph{Neural Information Processing Systems (NeurIPS)}, 2019.

\bibitem[Bellemare et~al.(2013)Bellemare, Naddaf, Veness, and
  Bowling]{bellemare2013arcade}
Marc~G Bellemare, Yavar Naddaf, Joel Veness, and Michael Bowling.
\newblock The arcade learning environment: An evaluation platform for general
  agents.
\newblock \emph{Journal of Artificial Intelligence Research}, 47:\penalty0
  253--279, 2013.

\bibitem[Bengio et~al.(2020)Bengio, Pineau, and Precup]{bengio2021correcting}
Emmanuel Bengio, Joelle Pineau, and Doina Precup.
\newblock Correcting momentum in temporal difference learning.
\newblock In \emph{NeurIPS Deep Reinforcement Learning Workshop}, 2020.

\bibitem[Bengio et~al.(2014)Bengio, Mirza, Goodfellow, Courville, and
  Da]{bengio2013empirical}
Yoshua Bengio, Mehdi Mirza, Ian Goodfellow, Aaron Courville, and Xia Da.
\newblock An empirical investigation of catastrophic forgeting in
  gradient-based neural networks.
\newblock In \emph{International Conference on Learning Representations
  (ICLR)}, 2014.

\bibitem[Benjamin et~al.(2019)Benjamin, Rolnick, and
  Kording]{benjamin2018measuring}
Ari Benjamin, David Rolnick, and Konrad Kording.
\newblock Measuring and regularizing networks in function space.
\newblock In \emph{International Conference on Learning Representations
  (ICLR)}, 2019.

\bibitem[Dabney et~al.(2018)Dabney, Rowland, Bellemare, and
  Munos]{dabney2018distributional}
Will Dabney, Mark Rowland, Marc Bellemare, and R{\'e}mi Munos.
\newblock Distributional reinforcement learning with quantile regression.
\newblock In \emph{Proceedings of the AAAI Conference on Artificial
  Intelligence}, volume~32, 2018.

\bibitem[Dabney et~al.(2021)Dabney, Barreto, Rowland, Dadashi, Quan, Bellemare,
  and Silver]{dabney2021value}
Will Dabney, Andr{\'e} Barreto, Mark Rowland, Robert Dadashi, John Quan, Marc~G
  Bellemare, and David Silver.
\newblock The value-improvement path: Towards better representations for
  reinforcement learning.
\newblock In \emph{AAAI Conference on Artificial Intelligence}, 2021.

\bibitem[Fedus et~al.(2020)Fedus, Ghosh, Martin, Bellemare, Bengio, and
  Larochelle]{fedus2020catastrophic}
William Fedus, Dibya Ghosh, John~D Martin, Marc~G Bellemare, Yoshua Bengio, and
  Hugo Larochelle.
\newblock On catastrophic interference in {A}tari 2600 games.
\newblock \emph{arXiv}, 2020.

\bibitem[Gelada et~al.(2019)Gelada, Kumar, Buckman, Nachum, and
  Bellemare]{gelada2019deepmdp}
Carles Gelada, Saurabh Kumar, Jacob Buckman, Ofir Nachum, and Marc~G Bellemare.
\newblock Deep{MDP}: Learning continuous latent space models for representation
  learning.
\newblock In \emph{International Conference on Machine Learning (ICML)}, 2019.

\bibitem[Ghosh \& Bellemare(2020)Ghosh and Bellemare]{ghosh2020representations}
Dibya Ghosh and Marc~G Bellemare.
\newblock Representations for stable off-policy reinforcement learning.
\newblock In \emph{International Conference on Machine Learning (ICML)}, 2020.

\bibitem[Gogianu et~al.(2021)Gogianu, Berariu, Rosca, Clopath, Busoniu, and
  Pascanu]{gogianu2021spectral}
Florin Gogianu, Tudor Berariu, Mihaela Rosca, Claudia Clopath, Lucian Busoniu,
  and Razvan Pascanu.
\newblock Spectral normalisation for deep reinforcement learning: an
  optimisation perspective.
\newblock \emph{arXiv preprint arXiv:2105.05246}, 2021.

\bibitem[Golub et~al.(1976)Golub, Klema, and Stewart]{golub1976rank}
Gene Golub, Virginia Klema, and Gilbert~W Stewart.
\newblock Rank degeneracy and least squares problems.
\newblock Technical report, STANFORD UNIV CA DEPT OF COMPUTER SCIENCE, 1976.

\bibitem[Henderson et~al.(2018)Henderson, Islam, Bachman, Pineau, Precup, and
  Meger]{henderson2018deep}
Peter Henderson, Riashat Islam, Philip Bachman, Joelle Pineau, Doina Precup,
  and David Meger.
\newblock Deep reinforcement learning that matters.
\newblock In \emph{Proceedings of the AAAI conference on artificial
  intelligence}, volume~32, 2018.

\bibitem[Hessel et~al.(2018)Hessel, Modayil, Van~Hasselt, Schaul, Ostrovski,
  Dabney, Horgan, Piot, Azar, and Silver]{hessel2018rainbow}
Matteo Hessel, Joseph Modayil, Hado Van~Hasselt, Tom Schaul, Georg Ostrovski,
  Will Dabney, Dan Horgan, Bilal Piot, Mohammad Azar, and David Silver.
\newblock Rainbow: Combining improvements in deep reinforcement learning.
\newblock In \emph{AAAI conference on artificial intelligence}, 2018.

\bibitem[Igl et~al.(2021)Igl, Farquhar, Luketina, Boehmer, and
  Whiteson]{igl2021transient}
Maximilian Igl, Gregory Farquhar, Jelena Luketina, Wendelin Boehmer, and Shimon
  Whiteson.
\newblock Transient non-stationarity and generalisation in deep reinforcement
  learning.
\newblock In \emph{International Conference on Learning Representations
  (ICLR)}, 2021.

\bibitem[Jaderberg et~al.(2017)Jaderberg, Mnih, Czarnecki, Schaul, Leibo,
  Silver, and Kavukcuoglu]{jaderberg2016reinforcement}
Max Jaderberg, Volodymyr Mnih, Wojciech~Marian Czarnecki, Tom Schaul, Joel~Z
  Leibo, David Silver, and Koray Kavukcuoglu.
\newblock Reinforcement learning with unsupervised auxiliary tasks.
\newblock In \emph{International Conference on Learning Representations
  (ICLR)}, 2017.

\bibitem[Kingma \& Ba(2015)Kingma and Ba]{kingma2015adam}
Diederik~P Kingma and Jimmy Ba.
\newblock Adam: A method for stochastic optimization.
\newblock In \emph{ICLR (Poster)}, 2015.

\bibitem[Kirkpatrick et~al.(2017)Kirkpatrick, Pascanu, Rabinowitz, Veness,
  Desjardins, Rusu, Milan, Quan, Ramalho, Grabska-Barwinska, Hassabis, Clopath,
  Kumaran, and Hadsell]{kirkpatrick2017overcoming}
James Kirkpatrick, Razvan Pascanu, Neil Rabinowitz, Joel Veness, Guillaume
  Desjardins, Andrei~A Rusu, Kieran Milan, John Quan, Tiago Ramalho, Agnieszka
  Grabska-Barwinska, Demis Hassabis, Claudia Clopath, Dharshan Kumaran, and
  Raia Hadsell.
\newblock Overcoming catastrophic forgetting in neural networks.
\newblock \emph{Proceedings of the National Academy of Sciences}, 114\penalty0
  (13):\penalty0 3521--3526, 2017.

\bibitem[Kumar et~al.(2021)Kumar, Agarwal, Ghosh, and
  Levine]{kumar2021implicit}
Aviral Kumar, Rishabh Agarwal, Dibya Ghosh, and Sergey Levine.
\newblock Implicit under-parameterization inhibits data-efficient deep
  reinforcement learning.
\newblock In \emph{International Conference on Learning Representations
  (ICLR)}, 2021.

\bibitem[Li \& Hoiem(2017)Li and Hoiem]{li2017learning}
Zhizhong Li and Derek Hoiem.
\newblock Learning without forgetting.
\newblock \emph{IEEE transactions on pattern analysis and machine
  intelligence}, 40\penalty0 (12):\penalty0 2935--2947, 2017.

\bibitem[Lopez-Paz \& Ranzato(2017)Lopez-Paz and Ranzato]{lopez2017gradient}
David Lopez-Paz and Marc'Aurelio Ranzato.
\newblock Gradient episodic memory for continual learning.
\newblock In \emph{Neural Information Processing Systems (NIPS)}, 2017.

\bibitem[Lyle et~al.(2021)Lyle, Rowland, Ostrovski, and Dabney]{lyle2021effect}
Clare Lyle, Mark Rowland, Georg Ostrovski, and Will Dabney.
\newblock On the effect of auxiliary tasks on representation dynamics.
\newblock In \emph{Artificial Intelligence and Statistics (AISTATS)}, 2021.

\bibitem[Maas et~al.(2013)Maas, Hannun, Ng, et~al.]{maas2013rectifier}
Andrew~L Maas, Awni~Y Hannun, Andrew~Y Ng, et~al.
\newblock Rectifier nonlinearities improve neural network acoustic models.
\newblock In \emph{Proceedings of the International Conference on Machine
  Learning}. Citeseer, 2013.

\bibitem[Machado et~al.(2018)Machado, Rosenbaum, Guo, Liu, Tesauro, and
  Campbell]{machado2018eigenoption}
Marlos~C. Machado, Clemens Rosenbaum, Xiaoxiao Guo, Miao Liu, Gerald Tesauro,
  and Murray Campbell.
\newblock Eigenoption discovery through the deep successor representation.
\newblock In \emph{International Conference on Learning Representations
  (ICLR)}, 2018.

\bibitem[Meier \& Nakatsukasa(2021)Meier and Nakatsukasa]{meier2021fast}
Maike Meier and Yuji Nakatsukasa.
\newblock Fast randomized numerical rank estimation.
\newblock \emph{arXiv e-prints}, pp.\  arXiv--2105, 2021.

\bibitem[Mnih et~al.(2015)Mnih, Kavukcuoglu, Silver, Rusu, Veness, Bellemare,
  Graves, Riedmiller, Fidjeland, Ostrovski, Petersen, Beattie, Sadik,
  Antonoglou, King, Kumaran, Wierstra, Legg, and Hassabis]{mnih2015human}
Volodymyr Mnih, Koray Kavukcuoglu, David Silver, Andrei~A Rusu, Joel Veness,
  Marc~G Bellemare, Alex Graves, Martin Riedmiller, Andreas~K Fidjeland, Georg
  Ostrovski, Stig Petersen, Charles Beattie, Amir Sadik, Ioannis Antonoglou,
  Helen King, Dharshan Kumaran, Daan Wierstra, Shane Legg, and Demis Hassabis.
\newblock Human-level control through deep reinforcement learning.
\newblock \emph{Nature}, 518\penalty0 (7540):\penalty0 529--533, 2015.

\bibitem[Morav{\v c}{\'\i}k et~al.(2017)Morav{\v c}{\'\i}k, Schmid, Burch,
  Lis{\'y}, Morrill, Bard, Davis, Waugh, Johanson, and Bowling]{morav2017deep}
Matej Morav{\v c}{\'\i}k, Martin Schmid, Neil Burch, Viliam Lis{\'y}, Dustin
  Morrill, Nolan Bard, Trevor Davis, Kevin Waugh, Michael Johanson, and Michael
  Bowling.
\newblock {DeepStack}: Expert-level artificial intelligence in heads-up
  no-limit poker.
\newblock \emph{Science}, 356\penalty0 (6337):\penalty0 508--513, 2017.
\newblock ISSN 0036-8075.

\bibitem[Quan \& Ostrovski(2020)Quan and Ostrovski]{dqnzoo2020github}
John Quan and Georg Ostrovski.
\newblock {DQN} {Zoo}: Reference implementations of {DQN}-based agents, 2020.
\newblock URL \url{http://github.com/deepmind/dqn_zoo}.

\bibitem[Raileanu et~al.(2020)Raileanu, Goldstein, Yarats, Kostrikov, and
  Fergus]{raileanu2020automatic}
Roberta Raileanu, Maxwell Goldstein, Denis Yarats, Ilya Kostrikov, and Rob
  Fergus.
\newblock Automatic data augmentation for generalization in reinforcement
  learning.
\newblock \emph{arXiv}, 2020.

\bibitem[Rusu et~al.(2016)Rusu, Colmenarejo, Çaglar Gülçehre, Desjardins,
  Kirkpatrick, Pascanu, Mnih, Kavukcuoglu, and Hadsell]{rusu2016policy}
Andrei~A. Rusu, Sergio~Gomez Colmenarejo, Çaglar Gülçehre, Guillaume
  Desjardins, James Kirkpatrick, Razvan Pascanu, Volodymyr Mnih, Koray
  Kavukcuoglu, and Raia Hadsell.
\newblock Policy distillation.
\newblock In \emph{International Conference on Learning Representations
  (ICLR)}, 2016.

\bibitem[Schaul et~al.(2019)Schaul, Borsa, Modayil, and Pascanu]{schaul2019ray}
Tom Schaul, Diana Borsa, Joseph Modayil, and Razvan Pascanu.
\newblock Ray interference: a source of plateaus in deep reinforcement
  learning.
\newblock \emph{arXiv}, 2019.

\bibitem[Schwarz et~al.(2018)Schwarz, Czarnecki, Luketina, Grabska{-}Barwinska,
  Teh, Pascanu, and Hadsell]{schwarz2018progress}
Jonathan Schwarz, Wojciech Czarnecki, Jelena Luketina, Agnieszka
  Grabska{-}Barwinska, Yee~Whye Teh, Razvan Pascanu, and Raia Hadsell.
\newblock Progress {\&} compress: {A} scalable framework for continual
  learning.
\newblock In \emph{International Conference on Machine Learning (ICML)}, 2018.

\bibitem[Sharkey \& Sharkey(1995)Sharkey and Sharkey]{sharkey1995analysis}
Noel~E Sharkey and Amanda~JC Sharkey.
\newblock An analysis of catastrophic interference.
\newblock \emph{Connection Science}, 1995.

\bibitem[Silver \& Mercer(2002)Silver and Mercer]{silver2002task}
Daniel~L Silver and Robert~E Mercer.
\newblock The task rehearsal method of life-long learning: Overcoming
  impoverished data.
\newblock In \emph{Conference of the Canadian Society for Computational Studies
  of Intelligence}, pp.\  90--101, 2002.

\bibitem[Silver et~al.(2017)Silver, Schrittwieser, Simonyan, Antonoglou, Huang,
  Guez, Hubert, Baker, Lai, Bolton, Chen, Lillicrap, Hui, Sifre, van~den
  Driessche, Graepel, and Hassabis]{silver2017mastering}
David Silver, Julian Schrittwieser, Karen Simonyan, Ioannis Antonoglou, Aja
  Huang, Arthur Guez, Thomas Hubert, Lucas Baker, Matthew Lai, Adrian Bolton,
  Yutian Chen, Timothy Lillicrap, Fan Hui, Laurent Sifre, George van~den
  Driessche, Thore Graepel, and Demis Hassabis.
\newblock Mastering the game of {G}o without human knowledge.
\newblock \emph{Nature}, 550\penalty0 (7676):\penalty0 354--359, 2017.

\bibitem[Teh et~al.(2017)Teh, Bapst, Czarnecki, Quan, Kirkpatrick, Hadsell,
  Heess, and Pascanu]{teh2017distral}
Yee~Whye Teh, Victor Bapst, Wojciech~M Czarnecki, John Quan, James Kirkpatrick,
  Raia Hadsell, Nicolas Heess, and Razvan Pascanu.
\newblock Distral: Robust multitask reinforcement learning.
\newblock In \emph{Neural Information Processing Systems (NIPS)}, 2017.

\bibitem[Tishby \& Zaslavsky(2015)Tishby and Zaslavsky]{tishby2015deep}
Naftali Tishby and Noga Zaslavsky.
\newblock Deep learning and the information bottleneck principle.
\newblock In \emph{2015 ieee information theory workshop (itw)}, pp.\  1--5.
  IEEE, 2015.

\bibitem[Van~Hasselt et~al.(2016)Van~Hasselt, Guez, and Silver]{van2016deep}
Hado Van~Hasselt, Arthur Guez, and David Silver.
\newblock Deep reinforcement learning with double {Q}-learning.
\newblock In \emph{AAAI Conference on Artificial Intelligence}, 2016.

\bibitem[Veeriah et~al.(2019)Veeriah, Hessel, Xu, Lewis, Rajendran, Oh, van
  Hasselt, Silver, and Singh]{veeriah2019discovery}
Vivek Veeriah, Matteo Hessel, Zhongwen Xu, Richard Lewis, Janarthanan
  Rajendran, Junhyuk Oh, Hado van Hasselt, David Silver, and Satinder Singh.
\newblock Discovery of useful questions as auxiliary tasks.
\newblock In \emph{Neural Information Processing Systems (NeurIPS)}, 2019.

\bibitem[Watkins \& Dayan(1992)Watkins and Dayan]{watkins1992q}
Christopher~JCH Watkins and Peter Dayan.
\newblock Q-learning.
\newblock \emph{Machine learning}, 8\penalty0 (3-4):\penalty0 279--292, 1992.

\end{thebibliography}
\bibliographystyle{iclr2021_conference}
\clearpage 
\appendix
\section{Theoretical Results}
\newcommand{\repdim}{d}
\subsection{Estimator Consistency}
\label{appx:consistency}
We here show that our estimator of the agent's feature rank is consistent. First recall 

\begin{align}
\left(\frac{1}{\sqrt{n}} \Phi_n\right)^\top \left(\frac{1}{\sqrt{n}} \Phi_n\right) &= \frac{1}{n} \sum_{i=1}^n \phi(x_i) \phi(x_i)^\top \, . \\
\intertext{The following property of the expected value holds}
\mathbb{E}_{x \sim P} [ \phi(x)\phi(x)^\top ] &= \mathbb{E}\left\lbrack \frac{1}{n} \sum_{i=1}^n \phi(x_i)\phi(x_i)^\top   \right\rbrack \, . \\
\intertext{It is then straightforward to apply the strong law of large numbers. To be explicit, we consider an element of $M = \mathbb{E}[\phi \phi^\top] $, $M_{ij}$. }
\mathbb{E}[(\phi(x) \phi(x)^\top)_{ij}]  &= M_{ij} = \mathbb{E}[\phi_i(x) \phi_j(x)]
\implies \sum_{k=1}^n \frac{1}{n} \phi_i(x_k)\phi_j(x_k) \overset{a.s.}{\rightarrow} M_{ij}  \, .
\end{align}

Since we have convergence for any $M_{ij}$, we get convergence of the resulting matrix to $M$. Because the singular values of $\Phi$ are the eigenvalues of $M$ and the eigenvalues are continuous functions of that matrix, the eigenvalues of $M_n$ converge to those of $M$ almost surely. Then for almost all values of $\epsilon$, the threshold estimator $N(\lambda_1, \dots, \lambda_k; \epsilon) = | \{\lambda_i > \epsilon\} |$ will converge to $N(\text{spec}(M); \epsilon )$. Specifically, the estimator will be convergent for all values of $\epsilon$ which are not eigenvalues of $M$ itself.

\subsection{Feature Dynamics}
\label{appx:feature_theory}
We apply similar analysis to that of \citet{lyle2021effect} to better understand the effect of sparse-reward environments on representation collapse. To do so, we consider the setting where $\Phi_t$ are features and $w_t$ a linear function approximator which jointly parameterize a value function $V_t = \langle \Phi_t(x), w_t \rangle$. We will be interested in studying a continuous-time approximation to TD learning, where the discrete-time expected updates
\begin{equation}
    \Phi_t \gets \Phi_t + \alpha \nabla_\Phi V_t [(\gamma P^\pi - I) V_t + R^\pi]
\end{equation}
\begin{equation}
    w_t \gets w_t + \beta \nabla_w V_t (\gamma P^\pi - I) V_t + R^\pi]
\end{equation}
are translated into a continuous-time flow, described by the following equations.
\begin{equation}
    \partial_t \Phi_t = \alpha (\gamma P^\pi - I) \Phi_t (w_t w_t^\top) + R^\pi w_t^\top 
\end{equation}
\begin{equation}
    \partial_t w_t = \beta \Phi_t^\top [(\gamma P^\pi - I)\Phi_t w_t + R^\pi] \, ,
\end{equation}
where $P^\pi \in \mathbb{R}^{\mathcal{X} \times \mathcal{X}}$ is the matrix of state-transition probabilities under $\pi$, and $R^\pi \in \mathbb{R}^{\mathcal{X}}$ is the vector of expected rewards.

One of the key take-aways of prior works is that under certain assumptions, a tabular value function following continuous-time TD dynamics will converge to its limiting value $V^\pi$ along the principal components of the environment's transition matrix.
In the function-approximation case described above, the dynamics of the features $\Phi_t$ are somewhat more complex. However, it turns out that under certain training regimes, we can obtain similar convergence results for the features.
We therefore turn our attention to ensemble prediction, where $M$ linear prediction `heads', each using a separate weight vector $w^m_t$ ($m=1,\ldots,M$) are all trained to regress on the TD targets using the shared feature representation of the state as input, resulting in the following dynamics.

\begin{align}
    \partial_t \Phi^{M}_t  \label{eq:ensemble-phi-flow}
    \!=  &   \alpha\! \sum_{m=1}^M (R^\pi\! +\! \gamma P^\pi \Phi^{M}_t w_t^{m}\! -\! \Phi^{M}_t w_t^{m})  (w_t^{m} )^\top \, , \\
    \partial_t w_t^{m} = & \beta (\Phi^{M}_t)^\top (R^\pi + \gamma P^\pi \Phi^M_t w^{m}_t - \Phi_t w^{m}_t ) \, . \label{eq:ensemble-w-flow}
\end{align}

We now restate the result of \citet{lyle2021effect} regarding the behaviour of the representation in the limit of many ensemble heads.

\begin{theorem}[Lyle et al., 2021]\label{thm:infinite-heads}
For $M \in \mathbb{N}$, let $(\Phi^{M}_t)_{t \geq 0}$ be the solution to Equation~\ref{eq:ensemble-phi-flow}, with each $w^{m}_t$ for $m=1,\ldots,M$ initialised independently from $N(0, \sigma_M^2)$, and fixed throughout training ($\beta=0$). 
We consider two settings: first, where the learning rate $\alpha$ is scaled as $\frac{1}{M}$
and $\sigma_M^2 = 1$ for all $M$, and second where $\sigma_M^2 = \frac{1}{M}$ and the learning rate $\alpha$ is equal to $1$. These two settings yield the following dynamics, respectively:
\begin{align}
       \lim_{M \rightarrow \infty} \partial_t \Phi_t^{M} \overset{P}{=}& -(I - \gamma P^\pi )\Phi_t^{M}\quad  \text{, and } \\
       \lim_{M \rightarrow \infty} \partial_t \Phi_t^{M} \overset{D}{=}& -(I - \gamma P^\pi )\Phi_t^{M} + R^\pi \epsilon^\top \; \text{,  $\epsilon \sim \mathcal{N}(0, I)\,$.}
\end{align}
The corresponding limiting trajectories for a fixed initialisation $\Phi_0 \in \mathbb{R}^{\mathcal{X}\times \repdim}$, are therefore given respectively by
\begin{align}
        \lim_{M \rightarrow \infty} \Phi_t^{M} \overset{P}{=}& \exp(-t(I - \gamma P^\pi))\Phi_0 \quad  \text{, and } \\
         \lim_{M \rightarrow \infty} \Phi_t^{M}  \overset{D}{=}& \exp(-t(I - \gamma P^\pi))(\Phi_0 - (I - \gamma P^\pi)^{-1} R^\pi \varepsilon^\top ) \nonumber \\
        & \qquad \ + (I - \gamma P^\pi)^{-1}R^\pi \varepsilon^\top  \, ,\,  \epsilon \sim \mathcal{N}(0, I)\,.
\end{align}
\end{theorem}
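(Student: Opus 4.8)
The plan is to exploit the fact that $\beta = 0$ freezes the weights at their initial values, $w^m_t = w^m_0$, so that the feature flow in Equation~\ref{eq:ensemble-phi-flow} collapses to an \emph{affine} matrix ODE in $\Phi^M_t$ whose coefficients are simple empirical moments of the random weights. Substituting $w^m_t = w^m_0$ and collecting terms, I would rewrite
\begin{equation}
\partial_t \Phi^{M}_t = \alpha \Big[ (\gamma P^\pi - I)\,\Phi^{M}_t\, S_M + R^\pi v_M^\top \Big], \quad\text{where}\quad S_M = \sum_{m=1}^M w^m_0 (w^m_0)^\top, \ \ v_M = \sum_{m=1}^M w^m_0 \, .
\end{equation}
All of the stochasticity is thus carried by the pair $(S_M, v_M)$, and the problem reduces to (i) identifying the limiting behaviour of these two statistics under each scaling, and (ii) transferring that convergence through the solution map of the ODE.

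For step (i) I would treat the two regimes separately. In the first regime ($\alpha = 1/M$, $\sigma_M^2 = 1$), the coefficients appearing in the scaled flow are $\tfrac{1}{M}S_M$ and $\tfrac{1}{M}v_M$; the strong law of large numbers gives $\tfrac{1}{M}S_M \to \mathbb{E}[w w^\top] = I$ and $\tfrac{1}{M}v_M \to \mathbb{E}[w] = 0$ almost surely, so the noisy $R^\pi v_M^\top$ term washes out and only $(\gamma P^\pi - I)\Phi^M_t$ survives. In the second regime ($\sigma_M^2 = 1/M$, $\alpha = 1$), the relevant statistics are $S_M$ and $v_M$ themselves: a second-moment computation shows $\mathbb{E}[S_M] = I$ with each entry of $S_M$ having variance of order $1/M$ (each summand contributes entries of order $1/M^2$ and the summands are independent), hence $S_M \overset{P}{\to} I$, while $v_M$ is a sum of $M$ independent $N(0,\tfrac{1}{M}I)$ vectors and is therefore \emph{exactly} $N(0,I)$ for every $M$ by Gaussian stability. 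This residual, non-vanishing Gaussian term is precisely the source of the $R^\pi \epsilon^\top$ appearing in the distributional limit.

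For step (ii) I would observe that the map sending a coefficient pair $(S,v)$ to the trajectory solving the affine ODE $\partial_t \Phi = \alpha[(\gamma P^\pi - I)\Phi S + R^\pi v^\top]$ is continuous (indeed smooth) on compact time intervals, since the vector field is jointly continuous and globally Lipschitz in $\Phi$ with constants depending continuously on $(S,v)$. Combined with the coefficient limits above, this yields convergence of $\Phi^M_t$ in probability in the first regime, and via the continuous mapping theorem convergence in distribution in the second. Finally I would solve the two limiting ODEs explicitly: the homogeneous equation integrates to $\exp(-t(I - \gamma P^\pi))\Phi_0$, and for the inhomogeneous equation I would use the steady-state decomposition $\Phi_t = \exp(-t(I-\gamma P^\pi))(\Phi_0 - \Phi_\infty) + \Phi_\infty$ with fixed point $\Phi_\infty = (I - \gamma P^\pi)^{-1} R^\pi \epsilon^\top$, recovering both displayed trajectories.

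I expect the main obstacle to lie in making step (ii) rigorous in the distributional regime. One must establish \emph{joint} convergence of $(S_M, v_M)$ — one component converging in probability to a deterministic limit, the other carrying a fixed Gaussian law — and then argue that the continuous solution map preserves this joint limit, so that the limiting trajectory is a well-defined random matrix driven by a single $\epsilon \sim N(0,I)$. Convergence of the \emph{derivative} does not on its own yield convergence of the \emph{trajectory}, so continuity of the flow with respect to its defining coefficients is the key ingredient, and some care is needed to verify the Lipschitz and compactness conditions uniformly in $M$.
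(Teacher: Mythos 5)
Your proposal addresses a statement that the paper itself never proves: Theorem~\ref{thm:infinite-heads} is explicitly \emph{restated} from prior work (Lyle et al., 2021), and the appendix only supplies a proof of the corollary that follows it. So there is no in-paper proof to compare against; judged on its own terms, your argument is correct and is essentially the standard (and almost certainly the original) route. Freezing $w^m_t = w^m_0$ collapses Equation~\ref{eq:ensemble-phi-flow} to the affine matrix ODE $\partial_t \Phi^M_t = \alpha\left[(\gamma P^\pi - I)\Phi^M_t S_M + R^\pi v_M^\top\right]$; in the first regime the law of large numbers gives $\tfrac{1}{M}S_M \to I$ and $\tfrac{1}{M}v_M \to 0$ almost surely, while in the second regime $S_M \to I$ in probability by your second-moment/Chebyshev bound and $v_M \sim N(0,I)$ \emph{exactly} for every $M$ by Gaussian stability; continuity of the solution map then transfers these coefficient limits to the trajectories, which integrate explicitly to the two displayed formulas.

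Two points to tighten. First, the ``joint convergence'' obstacle you flag at the end is less delicate than you suggest: since $S_M \to I$ in probability and that limit is a \emph{constant}, a standard Slutsky-type lemma gives $(S_M, v_M) \Rightarrow (I,\epsilon)$ jointly, regardless of the dependence between $S_M$ and $v_M$ (both are built from the same weights, but this is irrelevant when one marginal limit is deterministic). Composing with the solution map of a linear ODE viewed as a continuous map into $C([0,T], \mathbb{R}^{\mathcal{X}\times d})$ ensures a single draw of $\epsilon$ drives the entire limiting path, which is exactly what the theorem asserts; no uniform-in-$M$ estimates are needed once the continuous mapping theorem is invoked. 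Second, your steady-state decomposition $\Phi_\infty = (I-\gamma P^\pi)^{-1}R^\pi\epsilon^\top$ implicitly uses invertibility of $I - \gamma P^\pi$; this should be stated explicitly, and it holds because $P^\pi$ is a stochastic matrix (spectral radius $1$) and $\gamma < 1$.
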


One important corollary of this result occurs in sparse-reward environments under sub-optimal policies, where $R^\pi = \mathbf{0}$. In this case, we see that the representation converges precisely to the zero vector.

\begin{corollary}
Let $\Phi^M_t$, $(w)_{i=1}^M$ be defined as in Theorem~\ref{thm:infinite-heads}. Then if $R^\pi=0$, the feature representation converges to the zero vector for every state, independent of whether the learning rate $\alpha$ is scaled as $\frac{1}{M}$ or the linear weight initialization variance scales as $\frac{1}{M}$. In particular:
\begin{equation}
   \lim_{t \rightarrow \infty} \lim_{M \rightarrow \infty} \Phi^M_t \overset{P}{=} \mathbf{0} \, .
\end{equation}
As a result, we have that the feature rank of $\Phi$ will also tend to zero
\begin{equation}
   \forall \epsilon > 0 \quad  \lim_{t \rightarrow \infty} \lim_{M \rightarrow \infty} |\{\sigma \in  \text{SVD}(\Phi^M_t ) | \sigma > \epsilon \} | \overset{P}{=} 0 \; .
\end{equation}
\end{corollary}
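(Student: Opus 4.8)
The plan is to read off the limiting feature trajectories from Theorem~\ref{thm:infinite-heads}, specialise them to $R^\pi = \mathbf{0}$, and then show that the resulting deterministic matrix flow decays to the zero matrix as $t \to \infty$. First I would substitute $R^\pi = \mathbf{0}$ into both limiting expressions. In the first regime this immediately gives $\lim_{M \to \infty} \Phi^M_t \overset{P}{=} \exp(-t(I - \gamma P^\pi))\Phi_0$, while in the second regime every term containing the random factor $R^\pi \varepsilon^\top$ vanishes, so the distributional limit degenerates to the same deterministic matrix $\exp(-t(I - \gamma P^\pi))\Phi_0$. Since convergence in distribution to a constant coincides with convergence in probability, both regimes yield the same inner limit, which establishes the claimed independence from the choice of scaling $\frac{1}{M}$.

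Next I would analyse the long-time behaviour of $A := I - \gamma P^\pi$. The key observation is that $P^\pi$ is row-stochastic, so its spectral radius equals $1$ and every eigenvalue $\lambda$ of $\gamma P^\pi$ satisfies $|\lambda| \le \gamma < 1$. Hence every eigenvalue of $A$ lies in the half-plane $\{z : \mathrm{Re}(z) \ge 1 - \gamma\}$, so $A$ is positive-stable with spectral abscissa bounded below by $1 - \gamma > 0$. Standard stability theory for the matrix exponential then gives $\| \exp(-tA) \| \to 0$ as $t \to \infty$, where any polynomial factors arising from non-trivial Jordan blocks of $A$ are dominated by the exponential rate $e^{-(1-\gamma)t}$. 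Consequently $\exp(-tA)\Phi_0 \to \mathbf{0}$ for any fixed initialisation $\Phi_0$, and taking the outer limit $t \to \infty$ of the inner probabilistic limit established above yields $\lim_{t \to \infty} \lim_{M \to \infty} \Phi^M_t \overset{P}{=} \mathbf{0}$.

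For the feature-rank statement I would invoke the continuity of singular values as functions of the matrix entries, together with the continuous mapping theorem. For fixed $t$, the inner $M \to \infty$ limit sends the count $|\{\sigma \in \mathrm{SVD}(\Phi^M_t) : \sigma > \epsilon\}|$ to $|\{\sigma \in \mathrm{SVD}(\exp(-tA)\Phi_0) : \sigma > \epsilon\}|$ in probability, using that the counting functional is continuous at the deterministic limit point whenever $\epsilon$ is not one of its singular values. Since $\exp(-tA)\Phi_0 \to \mathbf{0}$, for all sufficiently large $t$ every singular value of $\exp(-tA)\Phi_0$ is strictly below any fixed $\epsilon > 0$, so the count is exactly $0$; this gives the second display.

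I expect the only genuinely delicate points to be the decay of $\exp(-tA)$ in the non-diagonalisable case and the careful ordering of the two limits, since the counting functional is discontinuous precisely where singular values cross the threshold $\epsilon$. Both are handled by restricting attention to the deterministic limit matrix, where the threshold is generic and the exponential bound $e^{-(1-\gamma)t}$ is explicit; the remainder of the argument is a direct specialisation of Theorem~\ref{thm:infinite-heads}.
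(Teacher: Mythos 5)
Your proposal is correct and follows the same route as the paper, which simply specialises Theorem~\ref{thm:infinite-heads} to $R^\pi = \mathbf{0}$ and lets $t \to \infty$. Your write-up additionally fills in the details the paper treats as immediate --- the spectral argument that $\exp(-t(I-\gamma P^\pi)) \to 0$, the degeneration of the distributional limit to a constant in the second scaling regime, and the continuity of the singular-value count at the zero matrix --- all of which are accurate.
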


\begin{proof}
The proof of this result follows from a straightforward application of Theorem~\ref{thm:infinite-heads}, setting $R^\pi = 0$ and letting $t \rightarrow \infty$.
We can obtain an analogous result for the srank of $\Phi^M_t$ when $P^\pi$ is diagonalizable by noting that for any eigenvector $v_i$ of $P^\pi$, the value of $v_i^\top \Phi^M_t v_i$ evolves as $c\exp(-t\lambda_i)$ for some constant $c$ that depends on $\Phi^M_0$. In this case, we obtain a limiting value of 1 for the srank so long as $P^\pi$ corresponds to an ergodic Markov chain.  
\end{proof}

The setting of this result is distinct from that of deep neural network representation dynamics, as neural networks use discrete optimization steps, finite learning rates, and typically do not leverage linear ensembles. However, we emphasize two crucial observations that suggest the intuition developed in this setting may be relevant: first, in sparse reward environments the representation will be pushed to zero along dimensions spanned by the linear weights used to compute outputs. Once sufficiently many independent weight vectors are being used to make predictions, this effectively forces \textit{every} dimension of the representation to fit the zero vector output. We would therefore expect representation collapse to be particularly pronounced in the QR-DQN agents trained on sparse-reward environments, as in this setting we obtain many independently initialized heads all identically  trying to fit the zero target.  

Second, in the presence of ReLU activations and stochastic optimization, the trajectories followed by the learned features in deep neural networks run the risk of getting `trapped' in negative values. If these features would normally tend to small values close to zero (as we would expect in agents following similar dynamics to those obtained in Theorem~\ref{thm:infinite-heads}), this increases the risk of unit saturation, where the representation may get trapped in bad local minima. This appears to be what happens in the QR-DQN agents trained on sparse-reward environments such as Montezuma's Revenge.


\section{Sequential Supervised Learning}
\label{appx:supervised}

\subsection{Details: Target-fitting Capacity in Non-stationary MNIST}
\label{appx:mnist-details}

In addition to our evaluations in the Atari domain, we also consider a variant of the MNIST dataset in which the labels change over the course of training. 
\begin{itemize}[leftmargin=0.5cm]
    \item \textbf{Inputs and Labels:} We use 1000 randomly sampled input digits from the MNIST dataset and assign either binary or random targets.
    \item \textbf{Distribution Shift:} We divide training into $N=30$ or $N=10$ iterations depending on the structure of the target function. In each iteration, a target function is randomly sampled, and the network's parameters obtained at the end of the previous iteration are used the initial values for a new optimization run. We use the Adam \citep{kingma2015adam} optimizer with learning rate \texttt{1e-3}, and train to minimize the mean squared error between the network outputs and the targets for either 3000 or 5000 steps depending on the nature of the target function.
    \item \textbf{Architecture:} we use a standard fully-connected architecture with ReLU activations, and vary with width and depth of the network. The parameters at the start of the procedure are initialized following the defaults in the Jax Haiku library.
\end{itemize}

We note that the dataset sizes, training budgets, and network sizes in the following experiments are all relatively small. This was chosen to enable short training times and decrease the computational budget necessary too replicate the experiments. The particular experiment parameters were selected to be the fastest and cheapest settings in which we could observe the capacity loss phenomenon, while still being nontrivial tasks. In general, we found that capacity loss is easiest to measure in a 'sweet spot' where the task for a given architecture is simple enough for a freshly-initialized network to attain low loss, but complex enough that the network cannot trivially solve the task. In the findings of the following section, we see how some of the larger architectures don't exhibit capacity loss on `easier' target functions, but do on more challenging ones that exhibit less structure. This suggests that replicating these results in larger networks will be achievable, but will require re-tuning the task difficulty to the larger network's capacity.

\subsection{Additional Evaluations}
\label{appx:cap-loss-supervised}
\begin{figure}
    \centering
    \includegraphics[width=\linewidth]{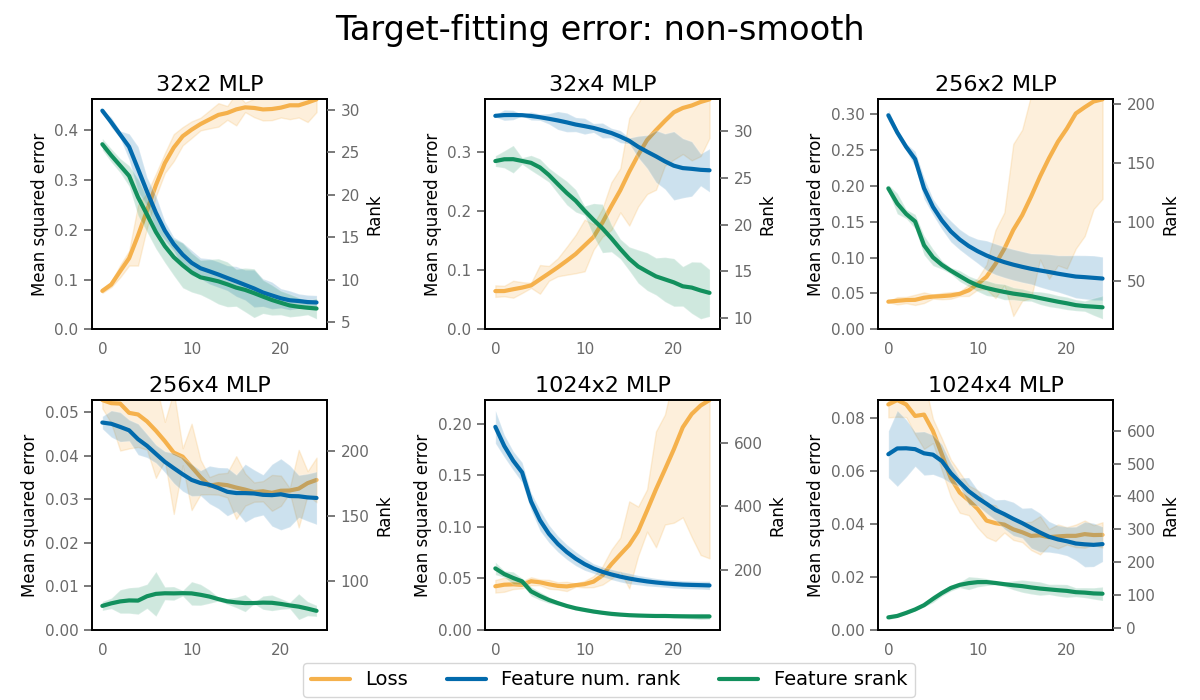}
    \caption{Mean squared error at the end of training on each iteration of the \textbf{hash-MNIST} task. Target-fitting error increases over time in smaller networks, but increasing the depth or width of the network slows down capacity loss, enabling positive transfer in the largest networks we studied.}
    \label{fig:hash-mnist}
\end{figure}
We expand on the MNIST target-fitting task shown in the main paper by considering how network size and target function structure influences capacity loss. 

\begin{itemize}
    \item \textbf{Random-MNIST} (smooth) this task uses the images from the MNIST dataset as inputs. The goal is to perform regression on the outputs of a randomly initialized, fixed neural network. We use a small network for this task, consisting of two width-30 fully connected hidden layers with ReLU activations which feed into a final linear layer which outputs a scalar. Because the network outputs are small, we scale them by 10 so that it is not possible to get a low loss by simply predicting the network's bias term. This task, while randomly generated, has some structure: neural networks tend to map similar inputs to similar outputs, and so the inductive bias of the targets will match that of the function approximator we train on them. 
    \item \textbf{Hash-MNIST} (non-smooth) uses the same neural network architecture as the previous task to generate targets, however rather than using the scaled network output as the target, we multiply the output by 1e3 and feed it into a sine function. The resulting targets no longer have the structure induced by the neural network. This task amounts to memorizing a set of labels for the input points.
    \item \textbf{Threshold-MNIST} (sparse) replaces the label of an image with a binary indicator variable indicating whether the label is smaller than some threshold. To construct a sequence of tasks, we set the threshold at iteration $i$ to be equal to $i$. This means that at the first iteration, the labels are of the form $(x,0)$ for all inputs $x$. At the second iteration, they are of the form $(x, \delta(y<1) )$, where $y$ is the digit in the image $x$, and so on.
\end{itemize}
We consider MLP networks of varying widths and depths, noting that the network architecture used to generate the random targets is fixed and independent of the approximating architecture. We are interested in evaluating whether factors such as target function difficulty, network parameterization, and number of target functions previously fit influence the network's ability to fit future target functions. Our results are shown in Figure~\ref{fig:hash-mnist},~\ref{fig:random-mnist}, and \ref{fig:threshold-mnist}. We visualize srank and $\effdim$ of the features output at the network's penultimate layer, in addition to the loss obtained at the end of each iteration. 

\begin{figure}
    \centering
    \includegraphics[width=\linewidth]{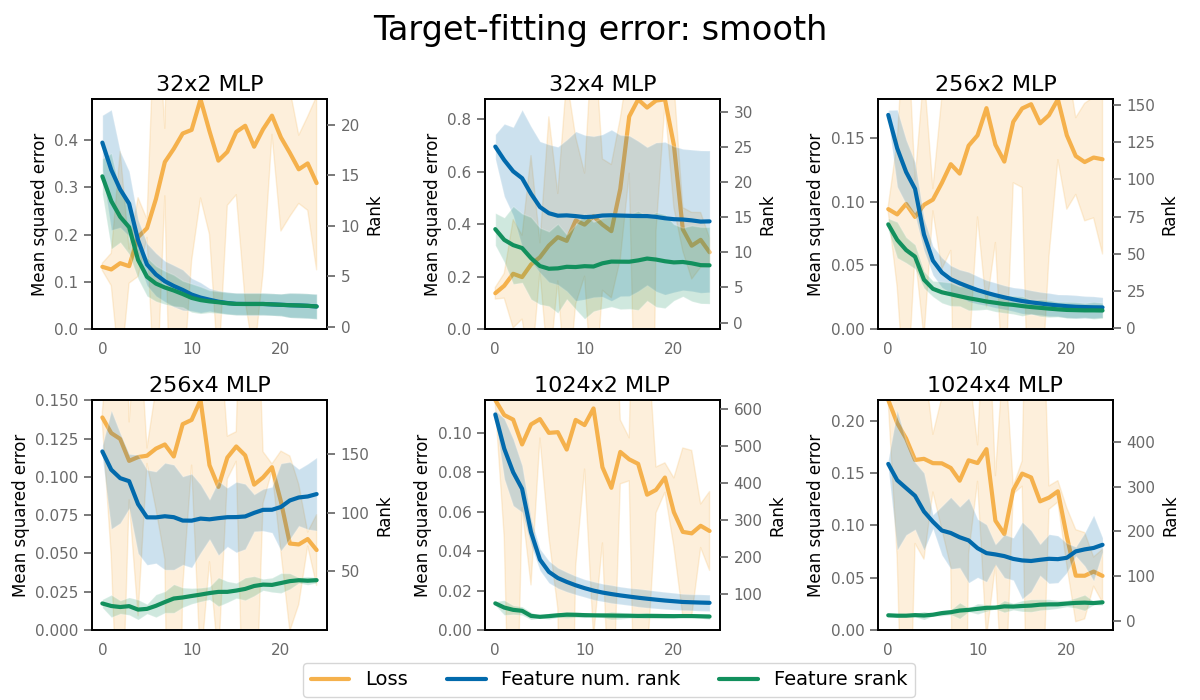}
    \caption{Mean squared error after 2e3 training steps on the \textbf{random-MNIST} task. Target-fitting error increases over time in under-parameterized networks, but increasing the depth or width of the network slows down capacity loss, enabling positive transfer in the largest network we studied.}
    \label{fig:random-mnist}
\end{figure}
\begin{figure}
    \centering
    \includegraphics[width=\linewidth]{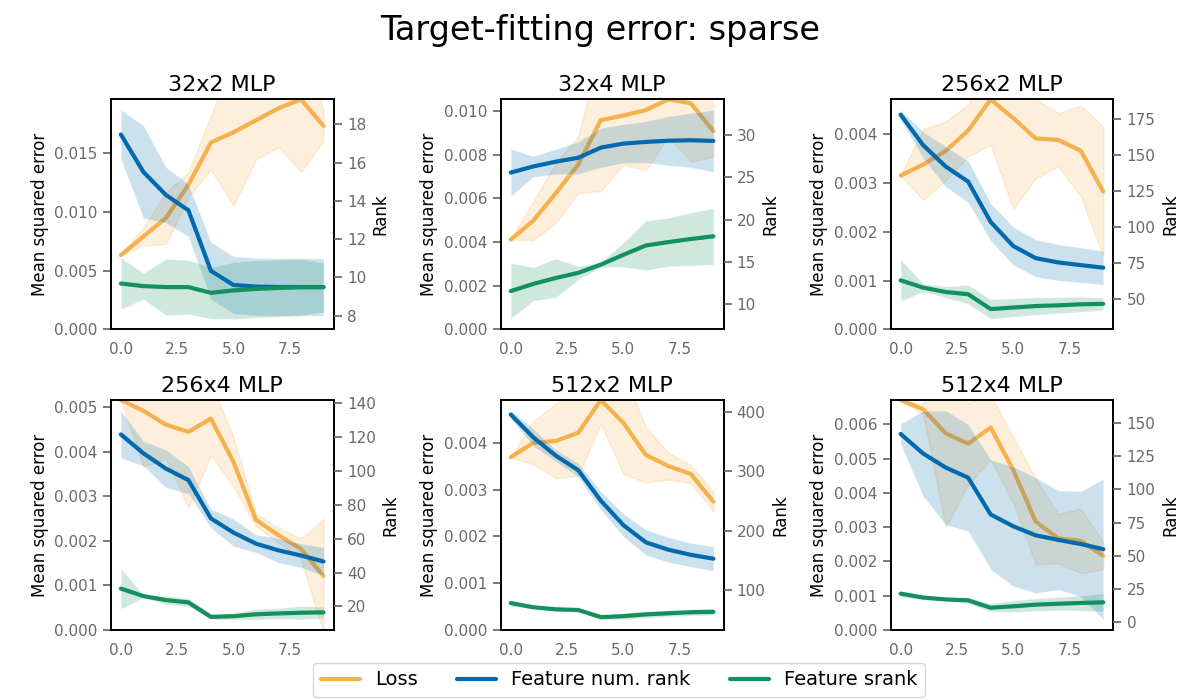}
    \caption{Mean squared error after 2e3 training steps on the \textbf{threshold-MNIST} task. Target-fitting error increases over time in under-parameterized networks, but increasing the depth or width of the network slows down capacity loss, enabling positive transfer in the largest network we studied.}
    \label{fig:threshold-mnist}
\end{figure}
\subsection{Effect of \pyoi on target-fitting capacity in MNIST}

In addition to our study of the Atari suite, we also study the effect of \pyoi on the non-stationary MNIST reward prediction task with a fully-connected architecture; see Figure~\ref{fig:pyoi_on_mnist}. We find that it significantly mitigates the decline in target-fitting capacity demonstrated in Figure~\ref{fig:mnist-cap}.

\begin{figure}
    \centering
    \includegraphics[width=\linewidth]{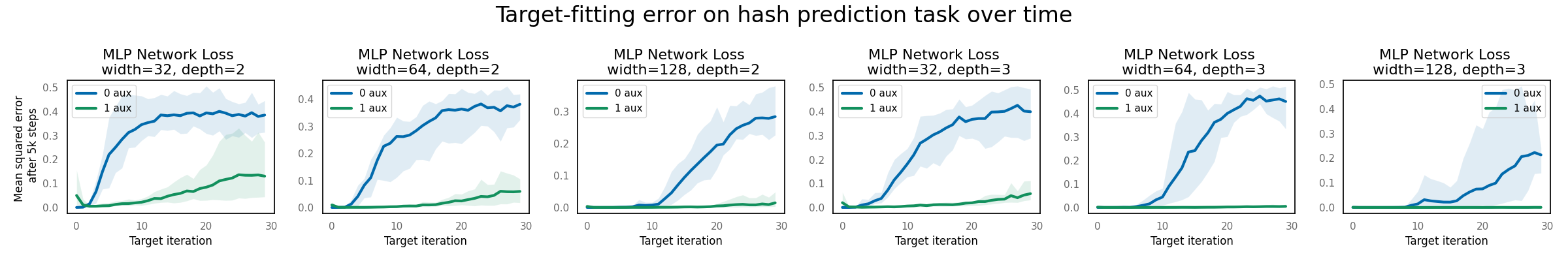}
    \includegraphics[width=\linewidth]{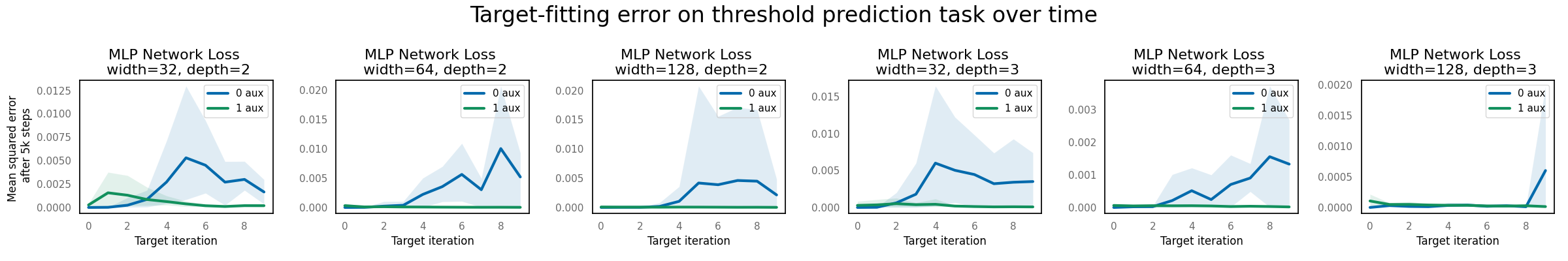}
    \includegraphics[width=\linewidth]{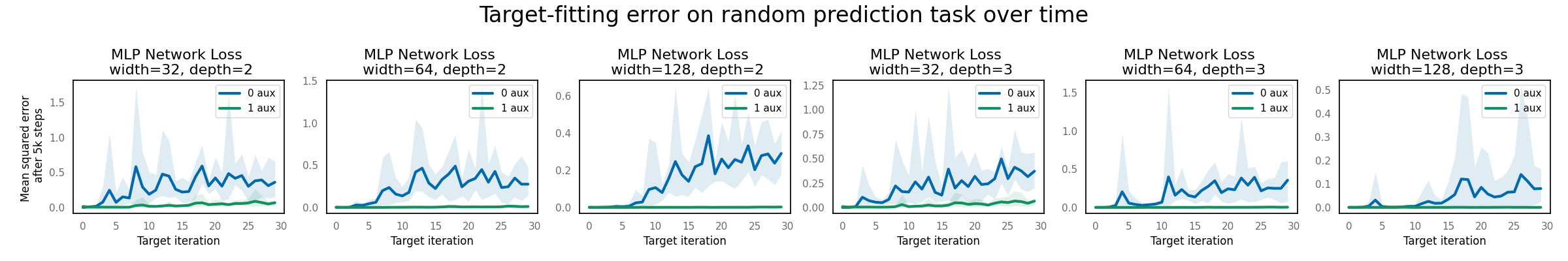}
    \caption{Effect of adding \pyoi to the regression objective in a random reward prediction problem on the non-stationary MNIST environment studied previously. We see that the \pyoi objective produces networks that can consistently outperform those trained with a standard regression objective, exhibiting minimal capacity loss in comparison to the same network architecture trained on the same sequence of targets. }
    \label{fig:pyoi_on_mnist}
\end{figure}
\section{Atari Evaluations}
\label{appx:atari}
We now present full evaluations of many of the quantities described in the paper, along with a study of the sensitivity of InFeR to its hyperparameters. We use the same training procedure for all of the figures in this section, loading agent parameters from checkpoints to compute the quantities shown.

\subsection{Hyperparameter sensitivity of InFeR in deep reinforcement learning agents}
\label{appx:hypers}
We report results of hyperparameter sweeps over the salient hyperparameters relating to InFeR, so as to assess the robustness of the method. For both the DDQN and Rainbow agents augmented with InFeR, we sweep over the number of auxiliary predictions (1, 5, 10, 20), the cumulant scale used in the predictions (10, 100, 200), and the scale of the auxiliary loss (0.01, 0.05, 0.1, 0.2). We consider the capped human-normalized return across four games (Montezuma's Revenge, Hero, James Bond, and MsPacman), and run each hyperparameter configuration with 3 seeds. Results are shown in Figure~\ref{fig:ddqn-sweep} for the DDQN agent; we compare performance as each pair of hyperparameters varies (averaging across the other hyperparameter, games, and seeds, and the last five evaluation runs of each agent). Corresponding results for Rainbow are given in Figure~\ref{fig:rainbow-sweep}.

\begin{figure}
    \centering
    \null
    \hfill
    \includegraphics[keepaspectratio,width=.32\textwidth]{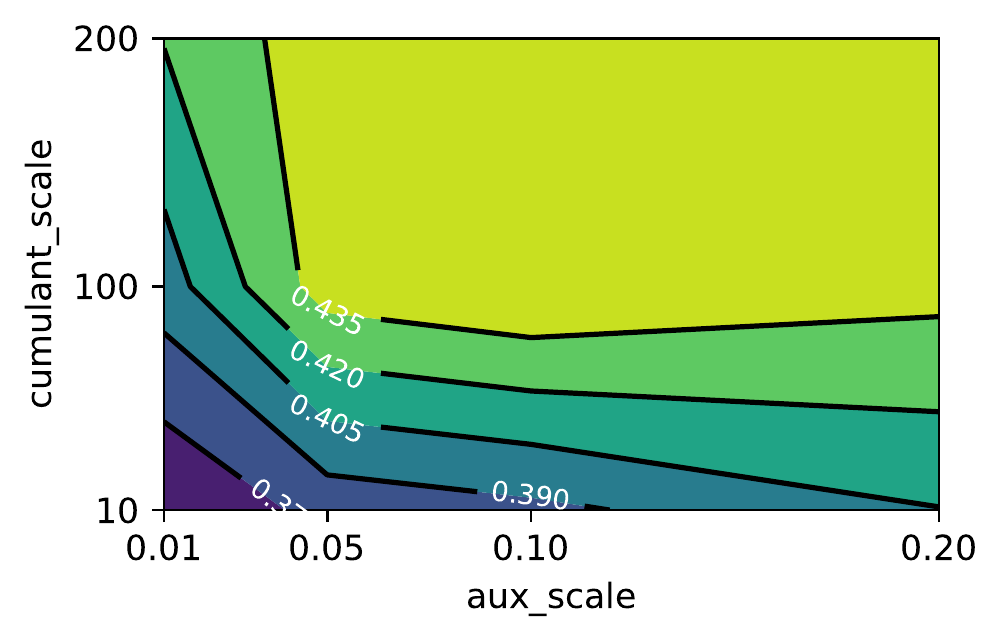}
    \includegraphics[keepaspectratio,width=.32\textwidth]{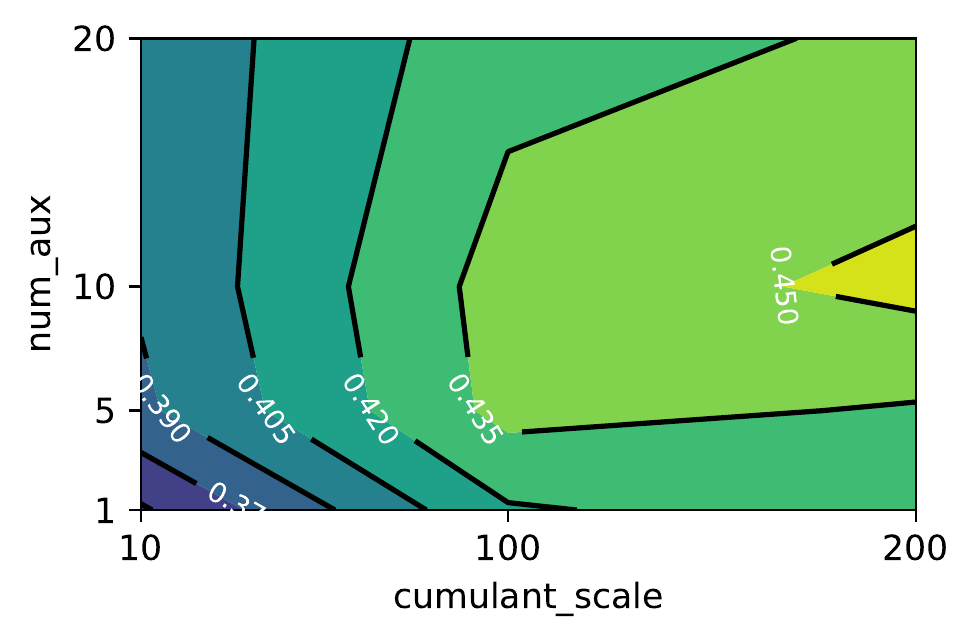}
    \includegraphics[keepaspectratio,width=.32\textwidth]{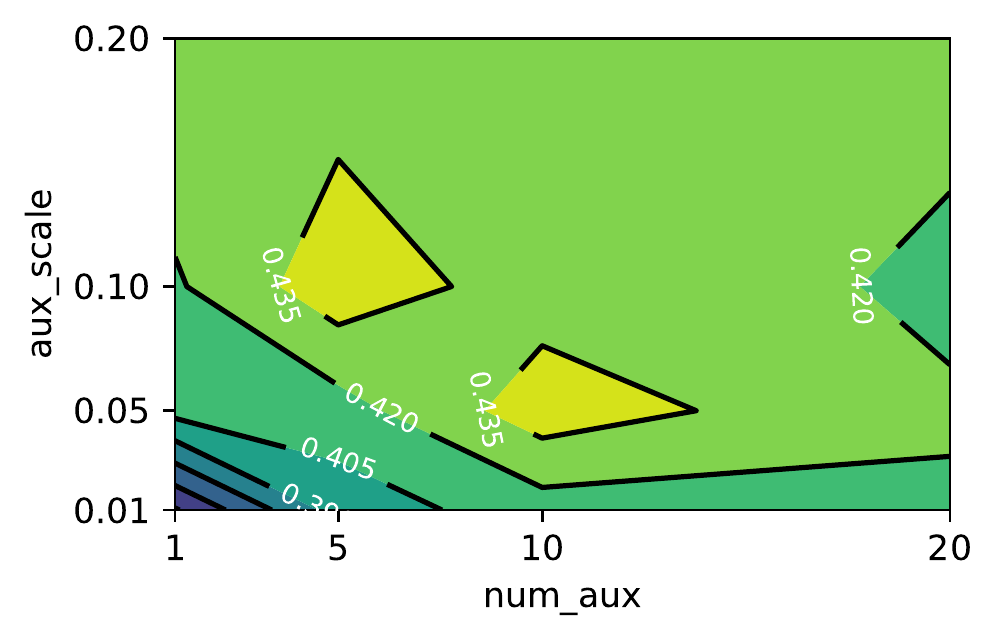}
    \hfill
    \null
    \caption{Hyperparameter sweeps for the DDQN+InFeR agent. Each contour plot shows average capped human-normalized score at the end of training marginalized over all hyperparameters not shown on its axes.}
    \label{fig:ddqn-sweep}
\end{figure}

\begin{figure}
    \centering
    \null
    \hfill
    \includegraphics[keepaspectratio,width=.32\textwidth]{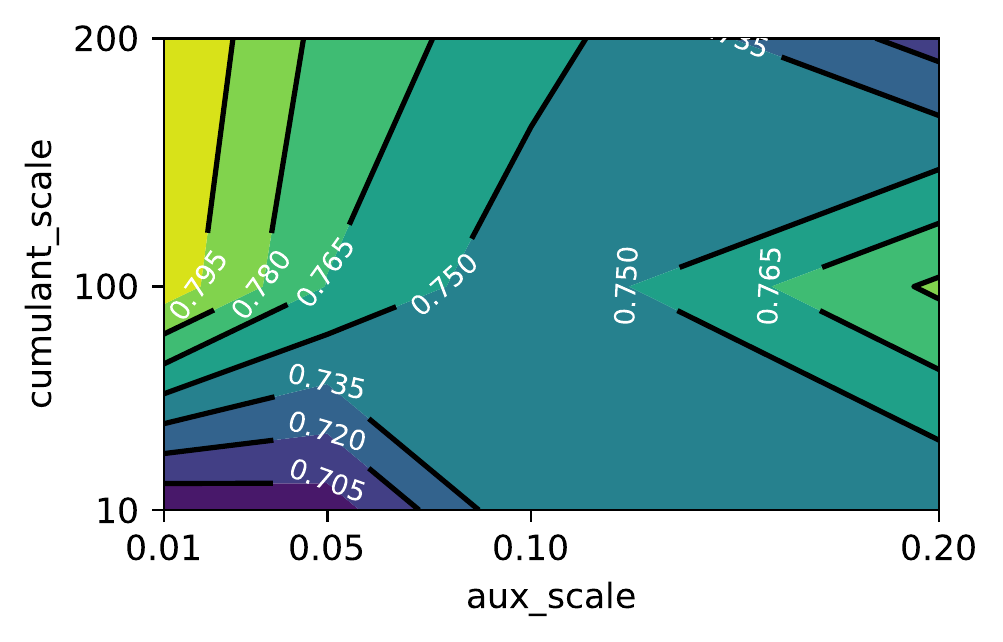}
    \includegraphics[keepaspectratio,width=.32\textwidth]{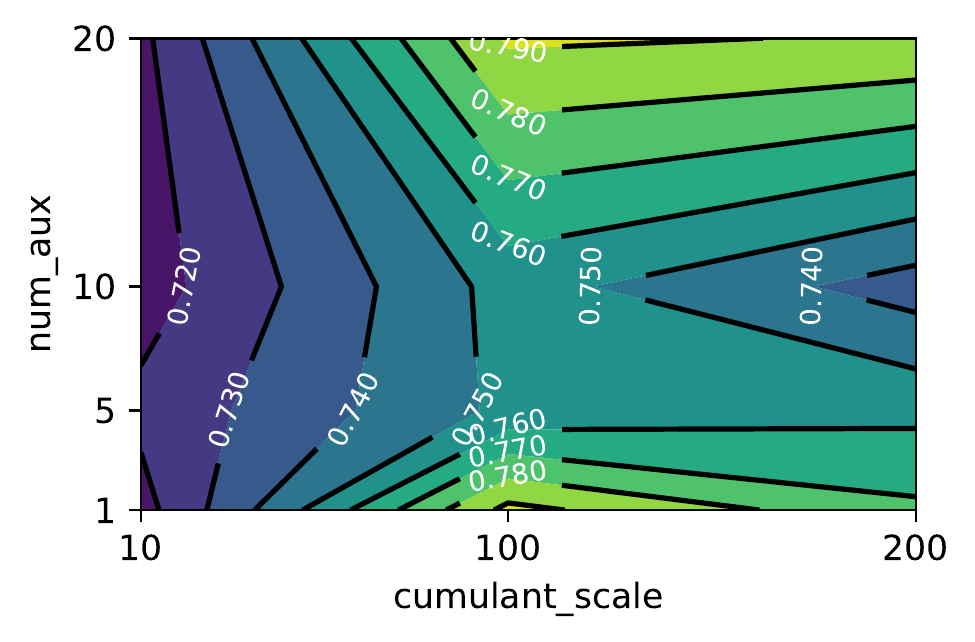}
    \includegraphics[keepaspectratio,width=.32\textwidth]{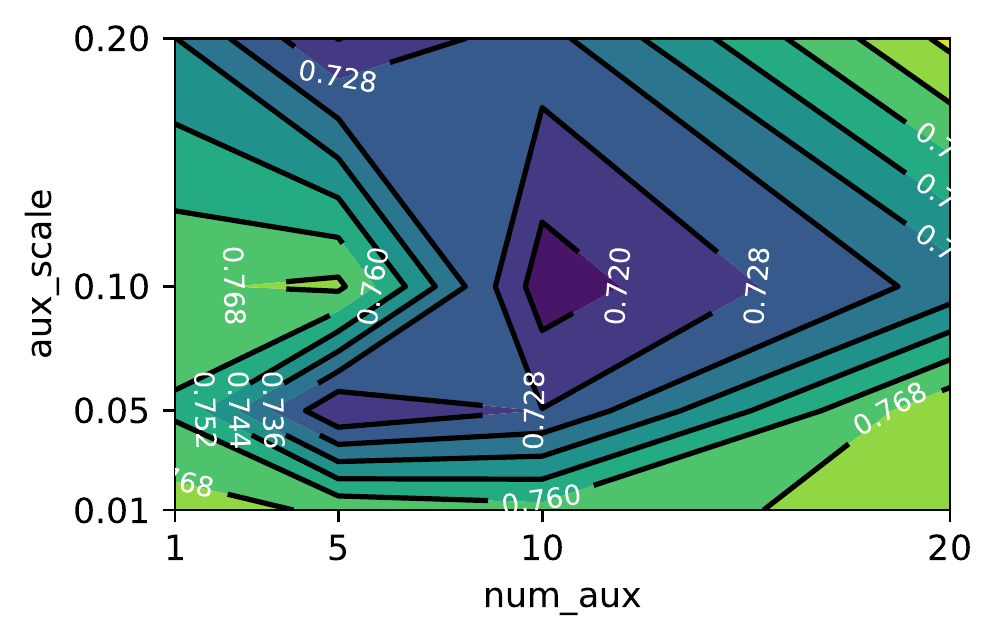}
    \hfill
    \null
    \caption{Hyperparameter sweeps for the Rainbow+InFeR agent. Each contour plot shows average capped human-normalized score at the end of training marginalized over all hyperparameters not shown on its axes.}
    \label{fig:rainbow-sweep}
\end{figure}
\label{appx:evals}

\begin{itemize}[leftmargin=0.5cm]
    \item \textbf{Agent:} We train a Rainbow agent \citep{hessel2018rainbow} with the same architecture and hyperparameters as are described in the open-source implementation made available by \citet{dqnzoo2020github}. We additionally add InFeR, as described in Section~\ref{sec:pyoi}, with 10 heads, gradient weight 0.1 and scale 100. 
    \item \textbf{Training:} We follow the training procedure found in the Rainbow implementation mentioned above. We train for 200 million frames, with 500K evaluation frames interspersed every 1M training frames. We save the agent parameters and replay buffer every 10M frames to estimate feature dimension and target-fitting capacity.
\end{itemize}

\subsection{Feature rank}
\label{appx:feature-rank-atari}

We first extend the results shown in Figure~\ref{fig:effdim_vanilla} to two additional games: Seaquest, and a sparsified version of Pong in which the agent does not receive negative rewards when the opponent scores. In these settings, we stored agent checkpoints once every 10M frames in each 200M frame trajectory, and used 5000 sampled inputs from the agent's replay buffer to estimate the feature rank, using the cutoff $\epsilon=0.01$. Results are shown in Figure~\ref{fig:full-effdim-perf-apx}.
\begin{figure}
    \centering
    \includegraphics[keepaspectratio,width=.9\textwidth]{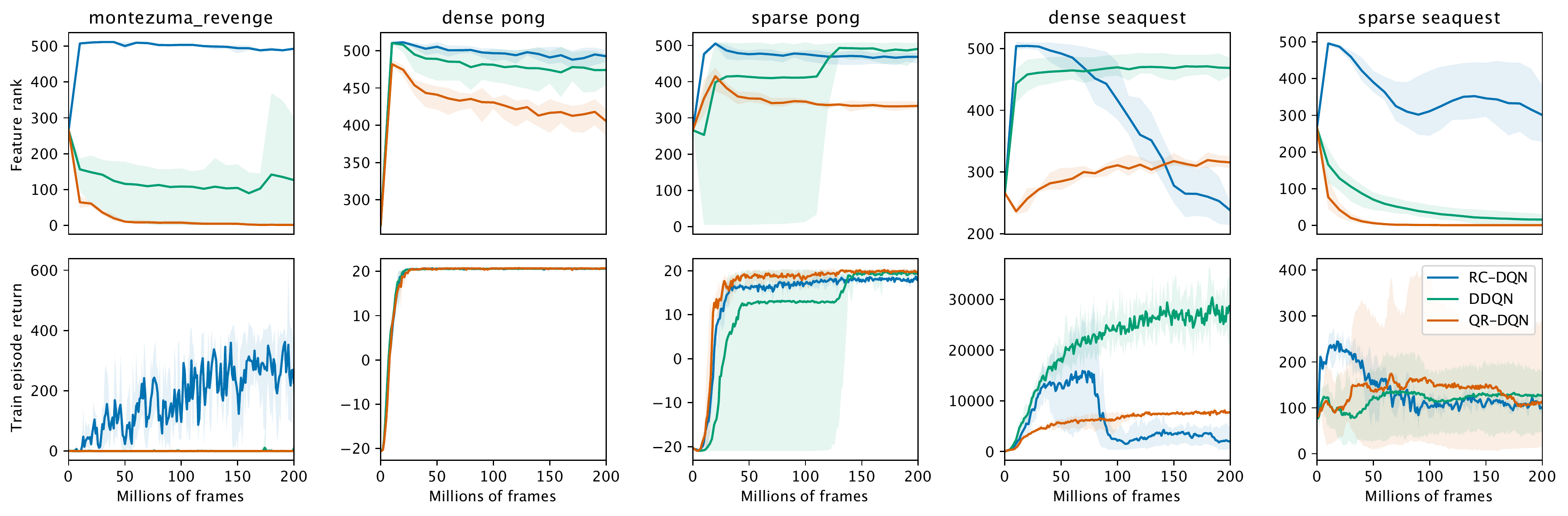}
    \caption{Feature rank and performance of RL agents on demonstrative Atari environments.}
    \label{fig:full-effdim-perf-apx}
\end{figure}

We further evaluate the evolution of feature rank in agents trained on all 57 games in the arcade learning environment. We find that the decline in dimension after the first checkpoint at 10M frames shown across the different agents in the selected games also occurs more generally in Rainbow agents across most environments in the Atari benchmark. We also show that in most cases adding InFeR mitigates this phenomenon. Our observations here do not show a uniform decrease in feature rank or a uniformly beneficial effect of InFeR. The waters become particularly muddied in settings where neither the Rainbow nor Rainbow+InFeR agent consistently make learning progress such as in tennis, solaris, and private eye. It is outside the scope of this work to identify precisely why the agents do not make learning progress in these settings, but it does not appear to be due to the type of representation collapse that can be effectively prevented by InFeR.

\textbf{Procedure.} We compute the feature rank by sampling $n=50000$ transitions from the replay buffer and take the set of origin states as the input set. We then compute a $n \times d$ matrix whose row $i$ is given by the output of the penultimate layer of the neural network given input $S_i$. We then take the singular value decomposition of this matrix and count the number of singular values greater than $0.01$ to get an estimate of the dimension of the network's representation layer.

In most games, we see a decline in feature rank after the first checkpoint at 10M frames. Strikingly, this decline in dimension holds even in the online RL setting where the agent's improving policy presumably leads it to observe a more diverse set of states over time, which under a fixed representation would tend to increase the numerical rank of the feature matrix. This indicates that even in the face of increasing state diversity, agents' representations face strong pressure towards degeneracy. It is worth noting, however, that the agents in dense-reward games do tend to see their feature rank increase significantly early in training; this is presumably due to the network initially learning to disentangle the visually similar states that yield different bootstrap targets.

\begin{figure}
    \centering
    \includegraphics[width=1.\linewidth]{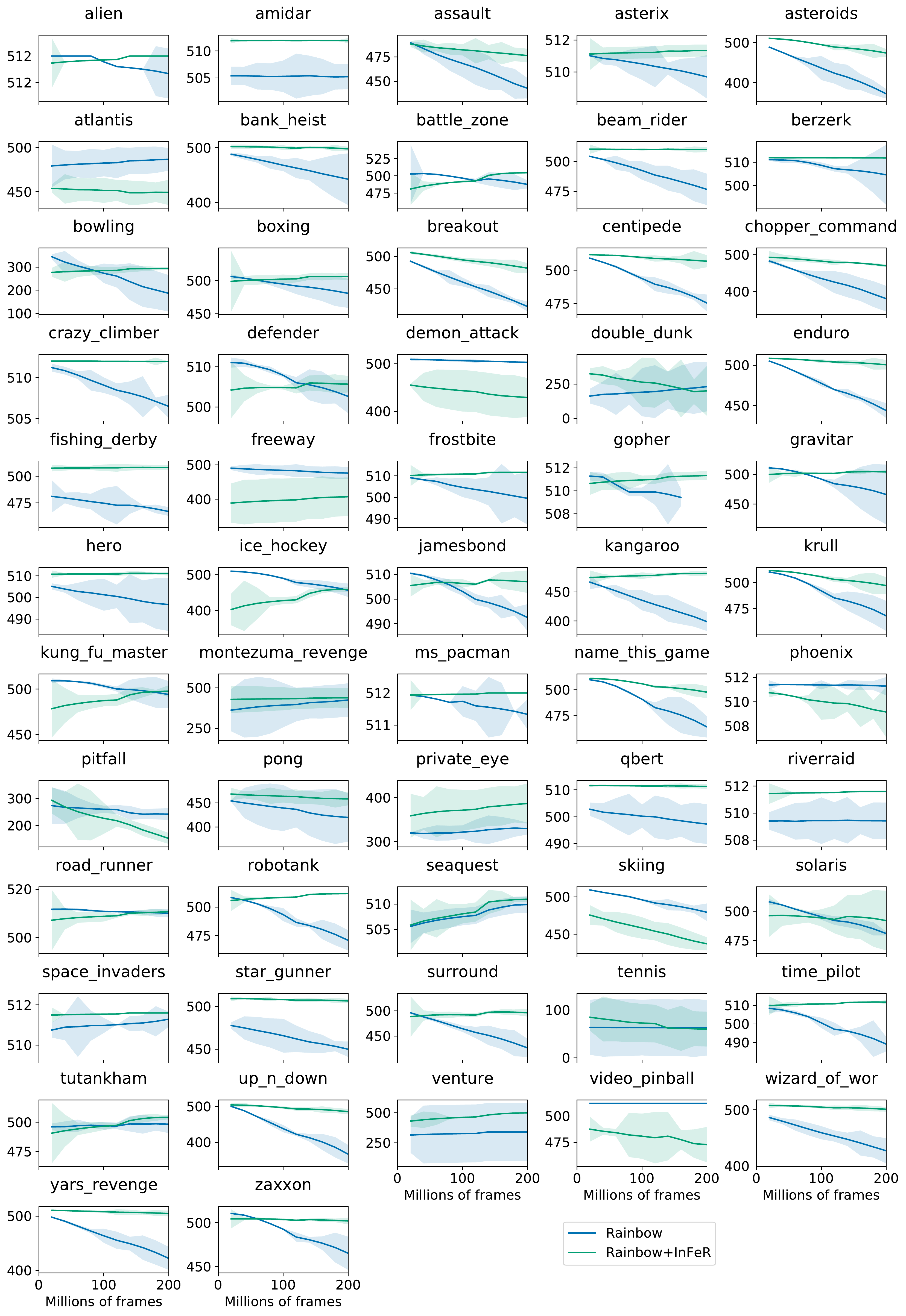}
    \caption{feature rank of agent representations over the course of training on all 57 games in the Atari benchmark. We compare Rainbow against Rainbow+InFeR. Rainbow+InFeR does not uniformly prevent decreases in feature rank across all games, but on average it has a beneficial effect on preserving representation dimension.  }
    \label{fig:effdim_all}
\end{figure}

\subsection{Target-fitting Capacity}
\label{appx:tf-capacity-atari}
In this section we examine the target-fitting capacity of neural networks trained with DQN, QR-DQN, and Rainbow over the course of $50$ million environment frames on five games in the Atari benchmark (amidar, montezuma's revenge, pong, bowling, and hero). Every $1$ million training frames we save a checkpoint of the neural network weights and replay buffer. For each checkpoint, we generate a random target network by initializing network weights with a new random seed. We then train the checkpoint network to predict the output of this random target network for $10000$ mini-batch updates (batch size of $32$) under a mean squared error loss, for states sampled from the first $100,000$ frames in the checkpoint's replay buffer. Furthermore, we repeat this for $10$ seeds used to initialize the random target network weights.

The results of this experiment are shown in Figure~\ref{fig:atari_target_fit_cap_comb} (in orange), where the solid lines show means and shaded regions indicate standard deviations over all seeds (both agent seeds ($5$) and target fitting seeds ($10$), for a total of $50$ trials). We also show srank and $\effdim$ of the features output at the network's penultimate layer for each of the checkpointed networks used for target fitting. These are computed using the network features generated from $1000$ states sampled randomly from that checkpoint's replay buffer. For feature rank, averages and standard deviations are only over the $5$ agent seeds.

\begin{figure}[h]
    \centering
    \includegraphics[width=\linewidth]{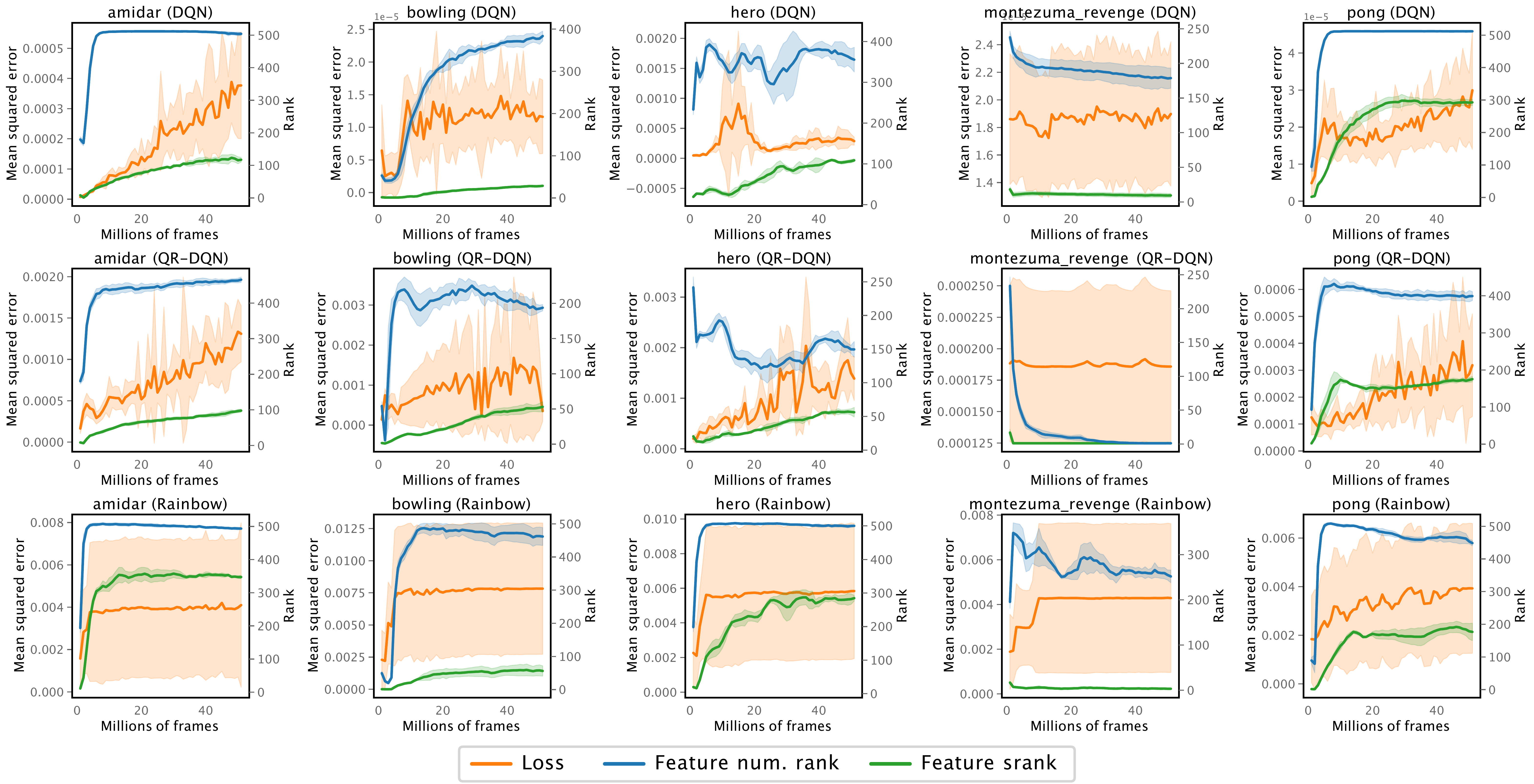}
    \caption{Mean squared error, after $10000$ training steps for the target-fitting on random network targets. We also show the corresponding feature rank of the pre-trained neural network (before target-fitting).}
    \label{fig:atari_target_fit_cap_comb}
\end{figure}

\subsection{Performance}

We provide full training curves for both Rainbow and Rainbow+InFeR on all games in Figures~\ref{fig:pyoirainbowhncap} \& \ref{fig:pyoirainbowhncap-double} (capped human-normalized performance), and \ref{fig:pyoirainboweval} \& \ref{fig:pyoirainboweval-double} (raw evaluation score). We also provide evaluation performance curves for DDQN and DDQN+InFeR agents in Figure~\ref{fig:ddqn-eval}.

\begin{figure}
    \centering
    \includegraphics[width=1.\linewidth]{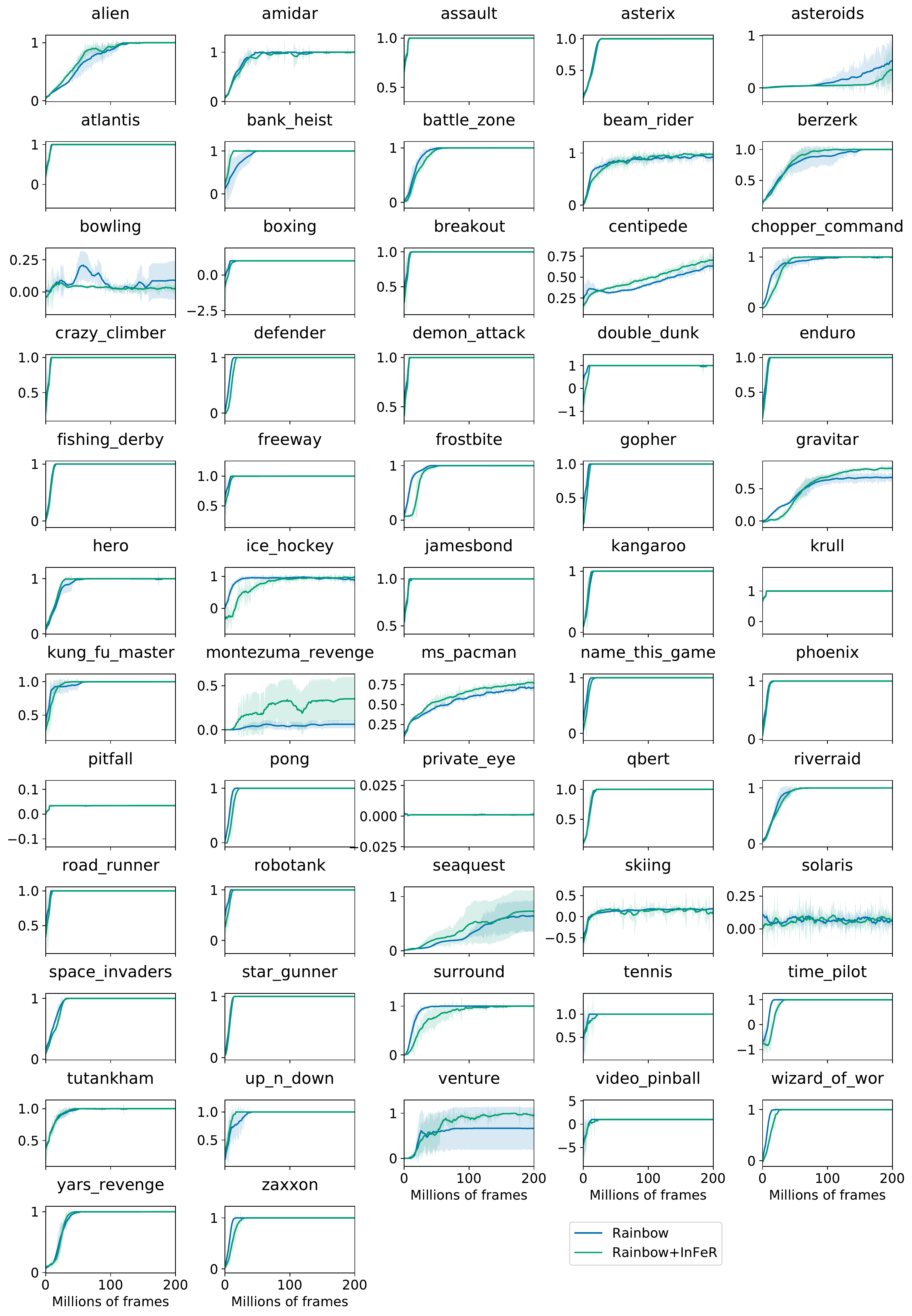}
    \caption{Full evaluation of capped human-normalized performance on Atari benchmarks for the default Rainbow architecture.}
    \label{fig:pyoirainbowhncap}
\end{figure}

\begin{figure}
    \centering
    \includegraphics[width=1.\linewidth]{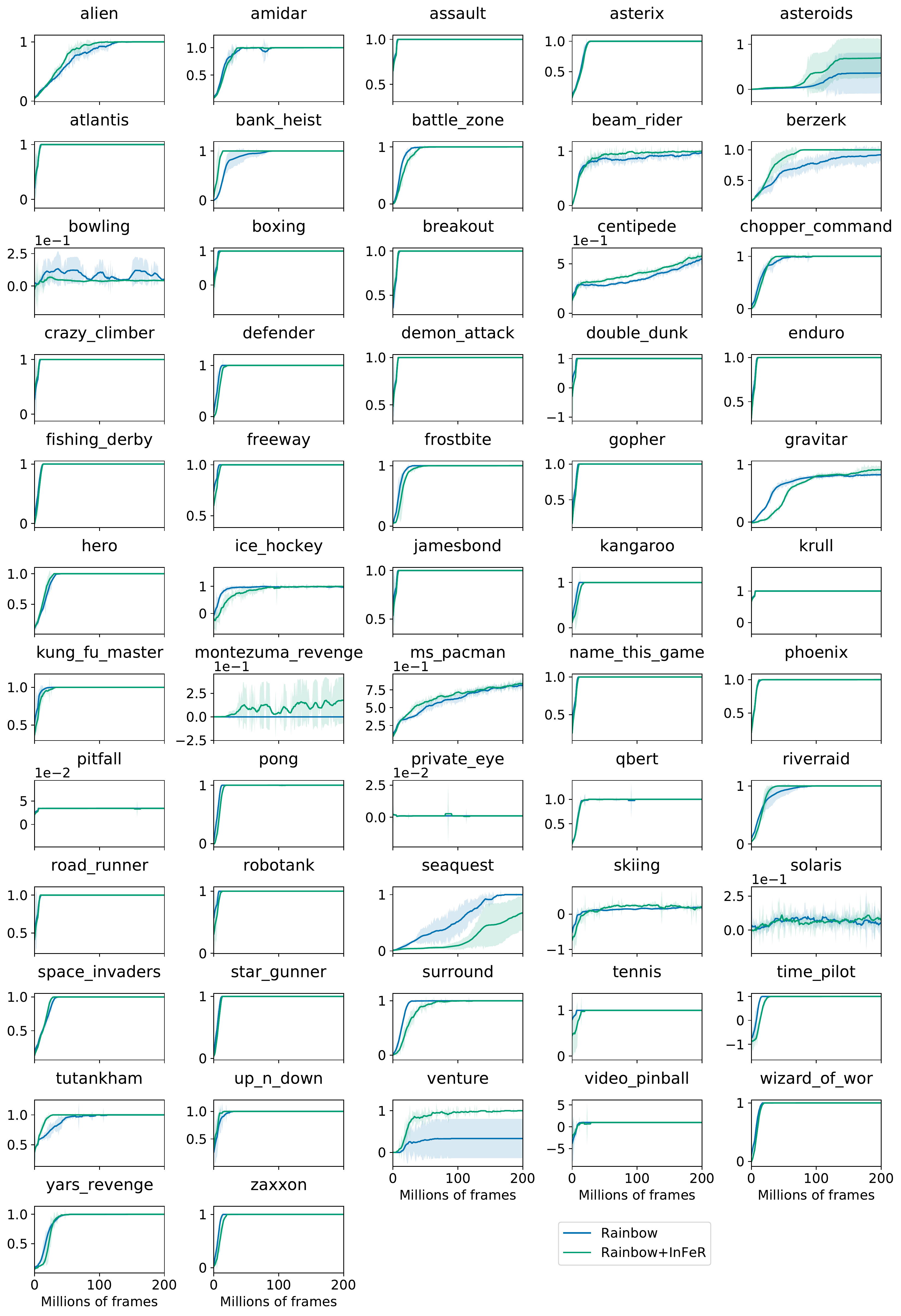}
    \caption{Full evaluation of capped human-normalized performance on Atari benchmarks in the double-width Rainbow architecture.}
    \label{fig:pyoirainbowhncap-double}
\end{figure}

\begin{figure}
    \centering
    \includegraphics[width=1.\linewidth]{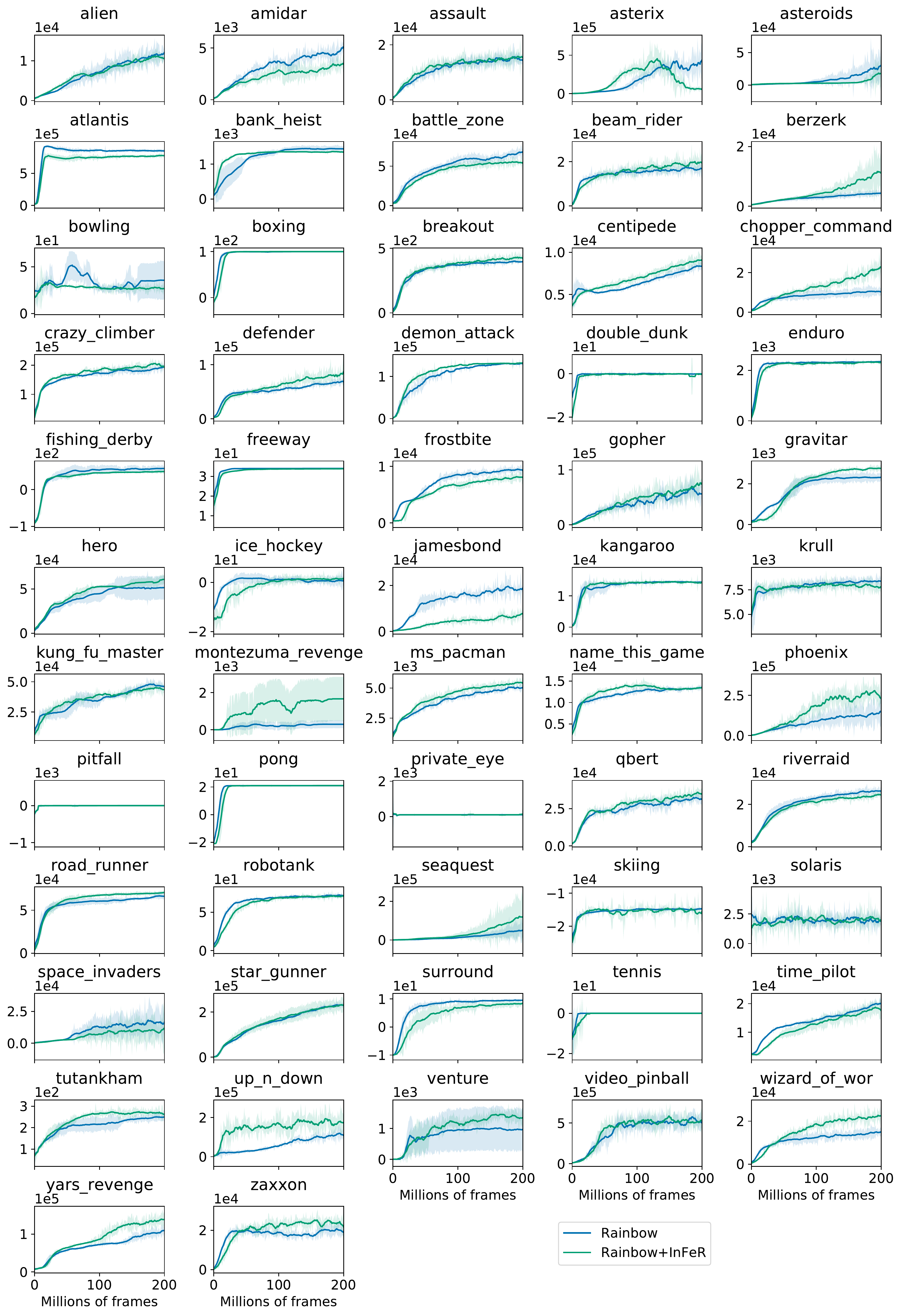}
    \caption{Full evaluation of raw scores on Atari benchmarks for the default Rainbow architecture.}
    \label{fig:pyoirainboweval}
\end{figure}

\begin{figure}
    \centering
    \includegraphics[width=1.\linewidth]{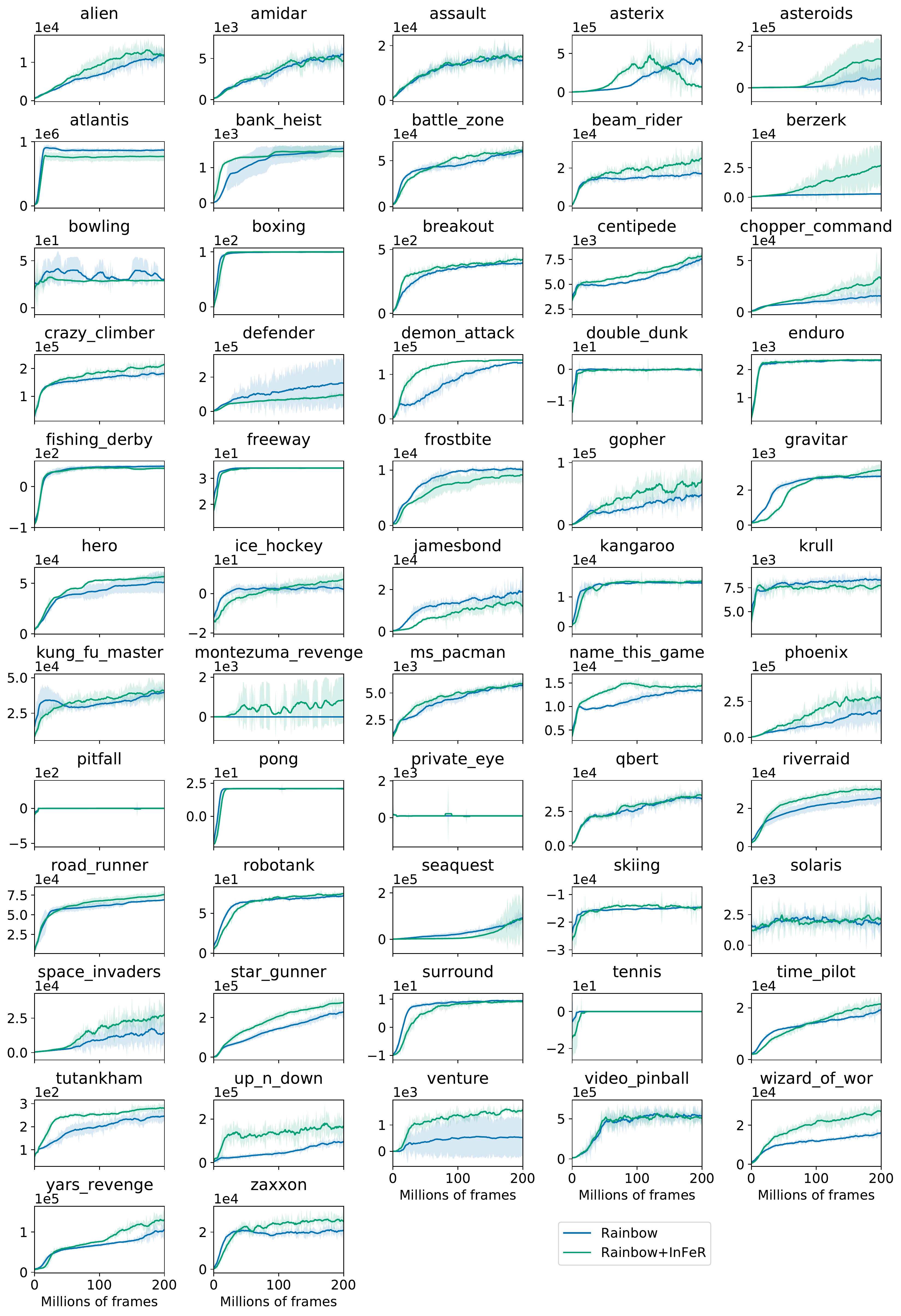}
    \caption{Full evaluation of raw scores on Atari benchmarks for the double-width Rainbow architecture.}
    \label{fig:pyoirainboweval-double}
\end{figure}

\begin{figure}
    \centering
    \includegraphics[width=1.\linewidth]{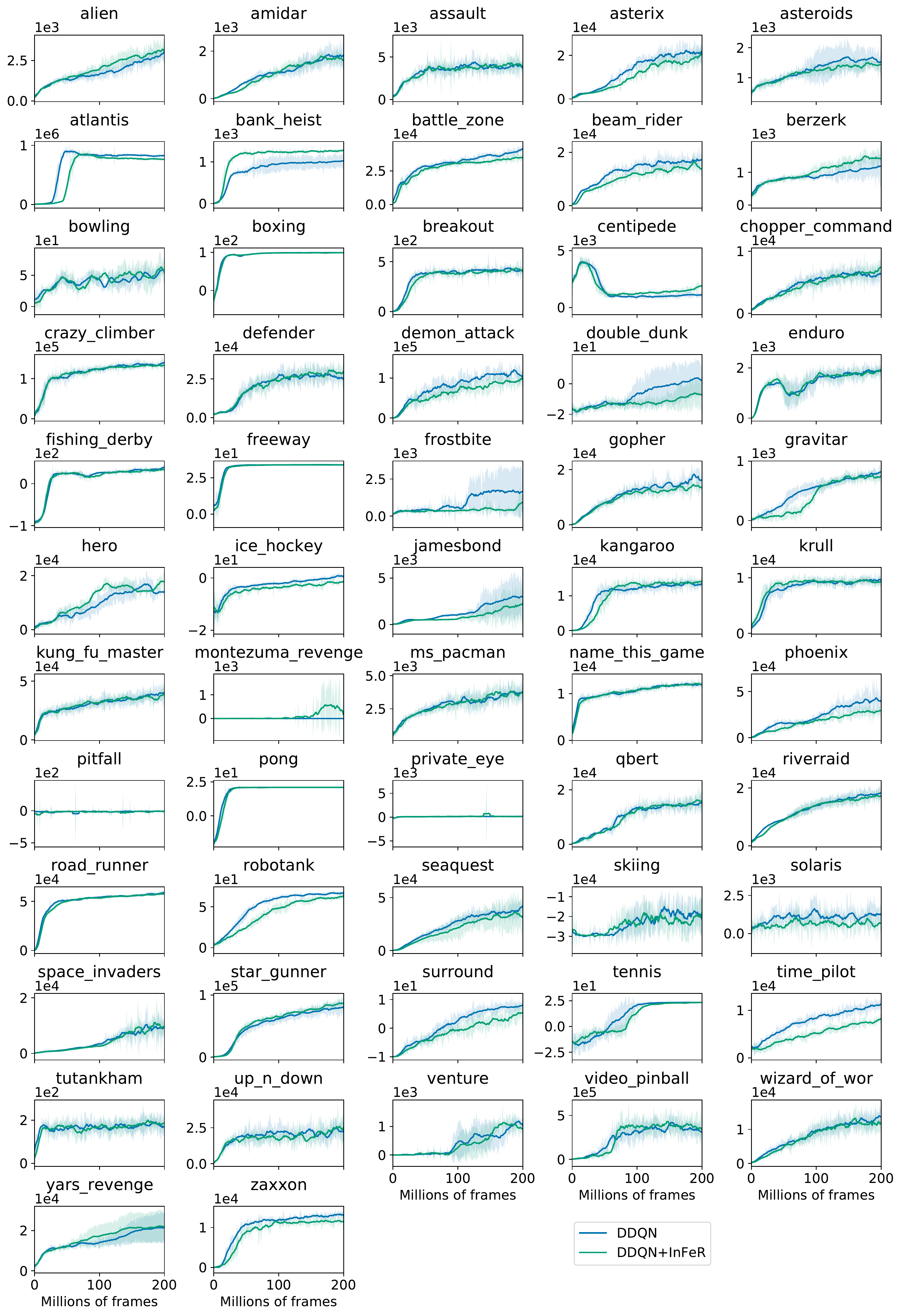}
    \caption{Evaluations of the effect of \pyoi on performance of a Double DQN agent. Overall we do not see as pronounced an improvement as in Rainbow, but note that the average human-normalized score over the entire benchmark is nonetheless slightly higher for the \pyoi agent, and that the performance improvement obtained by \pyoi in Montezuma's Revenge is still significant in this agent.}
    \label{fig:ddqn-eval}
\end{figure}

\end{document}